\newtheorem{proposition}{Proposition}[section]
\newtheorem{definition}{Definition}[section]
\newtheorem{remark}{Remark}[section]
\journal{Journal}
\begin{document}

\begin{frontmatter}



\title{Module Extraction for Efficient Object Query \\ over Ontologies with Large ABoxes}


\author[ece]{Jia Xu\corref{cor}}
\ead{j.xu11@umiami.edu}

\author[ece]{Patrick Shironoshita}
\ead{patrick@infotechsoft.com}

\author[cs]{Ubbo Visser}
\ead{visser@cs.miami.edu}

\author[ece]{Nigel John}
\ead{nigel.john@miami.edu}

\author[ece]{Mansur Kabuka}
\ead{m.kabuka@miami.edu}

\cortext[cor]{Corresponding author.}

\address[ece]{Department of Electrical and Computer Engineering,}
\address[cs]{Department of Computer Science,\\ University of Miami, Coral Gables, FL 33146, USA}


\begin{abstract}
The extraction of logically-independent fragments out of an ontology ABox can be useful for solving the tractability 
problem of querying ontologies with large ABoxes. In this paper, we propose a formal definition of an ABox module, such that it
guarantees complete preservation of facts about a given set of individuals, and thus can be reasoned independently
w.r.t. the ontology TBox. With ABox modules of this type, isolated or distributed (parallel) ABox reasoning becomes feasible, 
and more efficient data retrieval from ontology ABoxes can be attained.
To compute such an ABox module, we present a theoretical approach and also an approximation for $\mathcal{SHIQ}$ ontologies. 
Evaluation of the module approximation on different types of ontologies shows that, on average, extracted 
ABox modules are significantly smaller than the entire ABox, and the
time for ontology reasoning based on ABox modules can be improved significantly.
\end{abstract}

\begin{keyword}


Ontology \sep Reasoning \sep ABox Module \sep $\cal SHIQ$

\end{keyword}

\end{frontmatter}


\section{Introduction}
\label{intro}

Description logics (DLs), as a decidable fragment of first-order logic, are a family of logic based 
formalisms for knowledge representation, and the mathematical underpinning for modern ontology languages such as OWL \citep{owl,Horrocks2003} and 
OWL 2 \citep{Grau2008}. A DL ontology (or knowledge base) consists of a terminological part (TBox) $\mathcal{T}$ that defines terminologies such as 
concepts and roles of a given domain, and an assertional part (ABox) $\mathcal{A}$ that describes instances of the conceptual knowledge.
Similar to a database, the ontology TBox usually represents the data schema, and the ontology ABox corresponds to the actual data set.

In recent years, DL ontologies have been increasingly applied in the development of DL-based information systems in diverse areas, 
including biomedical research \citep{Demir2010,Visser2011}, health care \citep{Bhatt2009,Iqbal2011}, decision support \citep{Haghighi2012,Lee2008}, 
and many others; see \citep{Horrocks2008} for a review of existing applications. Most of these DL applications involve 
intensive querying of the underlying knowledge that requires reasoning over ontologies. 

Standard DL reasoning services include $\it subsumption$ $\it testing$ (i.e. testing if one concept is more general than the other), and
$\it instance$ $\it checking$ (i.e. checking if an individual is one instance of a given concept). The former is considered TBox reasoning, and
the latter is considered ABox reasoning as well as the central reasoning task for information retrieval from ontology ABoxes \citep{Schaerf1994}.
These reasoning services are highly complex tasks, especially for ontologies with an expressive DL \citep{Baader2007,Tobies2001}.
For example, instance checking in a $\mathcal{SHIQ}$ ontology is in EXPTIME \citep{Tobies2001}.
Consequently, query-answering over ontologies that rely on instance checking \citep{Horrocks2000b} 
can also have high computational complexity \citep{Calvanese2007,Glimm2008,Ortiz2008}.

Existing systems that provide standard DL reasoning services include HermiT \citep{Motik2007}, Pellet \citep{Sirin2007}, 
FaCT++ \citep{Tsarkov2006}, and Racer \citep{Haarslev2001}. They are all based on the $\it (hyper)tableau$ algorithms 
that are proven sound and complete. 
Despite highly optimized implementations of reasoning algorithms in these systems, they are still confronted with the 
scalability problem in practical DL applications, where the TBox could be relatively small and manageable, the ABox 
tends to be extremely large (e.g. in semantic web setting), which can result in severe tractability problems \citep{Horrocks2004,Motik2006}.

Due to the central role that is played by ontology querying in data-intensive DL-applications, 
various approaches have been proposed to cope with this ABox reasoning problem. For example,
there are optimization strategies proposed by \cite{Haarslev2002} for instance retrieval,
and also techniques developed for ABox reduction \citep{Fokoue2006} and partition \citep{Guo2006,Wandelt2012}.
\cite{Hustadt2004} proposed to reduce DL ontology into a disjunctive datalog program, 
so that existing techniques for querying deductive databases can be reused; and 
\cite{Grosof2003}, \cite{Motik2005}, and \cite{Royer1993} have suggested the combination of ontology reasoning with inference rules.

In this paper, conceiving that a large ABox may consist of data with great diversity and isolation, we explore the modularity of an ontology 
ABox and expect ABox reasoning to be optimized by utilizing logical properties of ABox modules.
Analogous to modularity defined for the ontology TBox \citep{Grau2007a,Grau2007b}, the notion of modularity for the ABox proposed in this paper is 
also based on the semantics and implications of ontologies; and an ABox module should be a $closure$ or $encapsulation$ of logical implications 
for a given set of individuals (that is, it should capture all facts, both explicit and implicit, of the given entities), such that each module 
can be reasoned $\it independently$ w.r.t. $\mathcal{T}$.

This property of an ABox module is desired for efficient ABox reasoning under situations, either when querying information of a particular individual, 
such as instance checking (i.e. test if $\mathcal{K} \models C(a)$) or arbitrary property assertion checking (i.e. test if $\mathcal{K} \models R(a,b)$), 
or when performing instance retrieval and answering (conjunctive) queries over ontologies \citep{Horrocks2000b,Glimm2008}. For the first case, an 
independent reasoning on the ABox module for that particular
individual will be sufficient, while for instance retrieval, the property of ABox modules 
would allow the ABox reasoning to be parallelized, and thus being able to take advantage of existing parallel-processing frameworks, such as 
MapReduce \citep{Dean2008}.

For illustration, consider a real-world ontology that models and stores massive biomedical data in an ABox, and a query-answering
system that is based on this ontology. A biomedical researcher may want to obtain information of a particular gene instance, say $\tt CHRNA4$, 
to see if it is involved in any case of diseases, and in what manner. To answer such queries submitted to this ontology-based system,
a reasoning procedure is normally invoked and applied to the ontology. Note however, given the fact that in this case the 
entire ontology ABox is extremely large, complete reasoning can be prohibitively expensive \citep{Glimm2008,Ortiz2008}, and it may take 
a lengthy period to reach any conclusions. Therefore, it would be preferable to perform reasoning on a smaller subject-related (in this case, 
$\tt CHRNA4$-related) module. In particular, this $\tt CHRNA4$-related ABox module should be a closure of all facts about the individual 
$\tt CHRNA4$, so that reasoning on this module w.r.t. to $\mathcal{T}$ can achieve the same conclusions about $\tt CHRNA4$ as if the reasoning 
would be applied to the entire ontology. In addition, the reasoning time should be significantly decreased, provided the module is precise and much
smaller than the entire ABox. 
Another case of querying over this biomedical ontology could be to retrieve all genes in the ABox that belong to the same pathological class of
a certain disease. Answering such queries requires to perform instance retrieval, and a simple strategy based on the ABox module here is to 
partition the ABox based on groups of individuals and to distribute each independent reasoning task into a cluster of computers.

Our main contributions in this paper are summarized as follows:
\begin{enumerate}
\item To capture the notion of ABox modularity, we provide a set of formal definitions about the ABox module, specifying 
logical properties of an ABox module such that it guarantees preservation of sound and complete facts of a given set of individuals, and 
such that it can be independently reasoned over w.r.t. the TBox;

\item To extract such an ABox module, we develop a theoretical approach for DL $\mathcal{SHIQ}$ based on the idea of $module$-$essential$ 
assertions. This approach gives an exposition for the problem of ABox-module extraction from a theoretical perspective, allowing a better 
understanding of the problem, and providing strategies that can be built into a general framework for computation of ABox modules;

\item To cope with the complexity for checking module-essentiality using a DL-reasoner and to make the techniques applicable to practical DL 
applications, we also present a simple and tractable syntactic approximation.
\end{enumerate}

\noindent
Additionally, we present an evaluation of our approximation on different ontologies with large ABoxes, including those generated by existing 
benchmark tools and realistic ones that are used in some biomedical projects. We also present and compare several simple optimization techniques 
for our approximation. And finally, we show the efficiency of
ontology reasoning that can be achieved when using the ABox modules.

\section{Preliminaries}

\subsection{Description Logic $\mathcal{SHIQ}$}

The Description Logic $\mathcal{SHIQ}$ is extended from the well-known logic $\mathcal{ALC}$ \citep{Schmidt1991}, with added supports for role hierarchies, 
inverse roles, transitive roles, and qualified number restrictions \citep{Horrocks2000}.
A $\mathcal{SHIQ}$ ontology defines a set \textbf{R} of role names, a
set \textbf{I} of individual names, and a set \textbf{C} of concept
names. 

\begin{definition} [{\bf $\mathcal{SHIQ}$-Role}]
A $\mathcal{SHIQ}$-role can be an atomic (named) role $R \in \bf R$, or an inverse role $R^-$ with $R \in \bf R$. The complete role
set in a $\mathcal{SHIQ}$ ontology is denoted $\textbf{R}^*$=$\textbf{R} \cup \{R^- | R \in \textbf{R}\}$. 
To avoid $R^{--}$, the function Inv($\cdot$) is defined, such that Inv($R$)=$R^-$ and Inv($R^-$)=$R$.
\end{definition}

A role hierarchy $H_R$ can be defined in an ontology by a set of role inclusion axioms, each of which is expressed in the form of $R_1 \sqsubseteq R_2$, 
with $R_1, R_2 \in \bf R^*$. We call $R_1$ a $\it subrole$ of $R_2$, if $R_1 \sqsubseteq R_2 \in H_R$ or if there exist $S_1, \ldots, S_n \in \bf R^*$
with $R_1 \sqsubseteq S_1, S_1 \sqsubseteq S_2, \ldots, S_n \sqsubseteq R_2 \in H_R$. Here, $\sqsubseteq$ is reflexive and transitive. 

A role $R \in \bf R$ is $\it transitive$, denoted Trans($R$), if $R \circ R \sqsubseteq R$  or Inv($R$) $\circ$ Inv($R$) $\sqsubseteq$ Inv($R$).
Finally, a role is called $\it simple$ if it is neither transitive nor has any transitive subroles \citep{Baader2007,Horrocks2000}.

\begin{definition} [{\bf $\mathcal{SHIQ}$-Concept}]
A $\mathcal{SHIQ}$-concept is either an atomic (named) one or a complex one that can be defined using the following constructors recursively
\begin{equation*} 
\begin{split}
	C, D  ::=  & A \ |\ \neg C\ |\ C \sqcap D\ |\ C \sqcup D \\
	&|\ \forall R.C\ |\ \exists R.C\ |\ \leq nS.C\ |\ \geq nS.C 
\end{split}
\end{equation*}
where $A$ is an atomic concept, $R, S \in \textbf{R}^*$ with $S$ being simple, and $n$ is a non-negative integer. The universal concept $\top$ 
and the bottom concept $\bot$ can be viewed as $A \sqcup \neg A$ and $A \sqcap \neg A$ respectively.
\end{definition}

\begin{definition} [{\bf $\mathcal{SHIQ}$ Ontology}]
A $\mathcal{SHIQ}$ ontology is a tuple, denoted $\mathcal{K=(T,A)}$, where $\mathcal{T}$ is the terminology representing general knowledge of a 
specific domain, and $\mathcal{A}$ is the assertional knowledge representing a particular state of the terminology.

The terminology $\mathcal{T}$ of ontology $\mathcal{K}$ is the disjoint union of a finite set of role inclusion axioms (i.e. $R_1 \sqsubseteq R_2$)
and a set of concept inclusion axioms in the form of $C \equiv D$ and $C \sqsubseteq D$, where $C$ and $D$ are arbitrary $\mathcal{SHIQ}$-concepts. 
Statements $C \sqsubseteq D$ are called $\it general\ concept\ inclusion$ axioms (GCIs), and
$C \equiv D$ can be trivially converted into two GCIs as $C \sqsubseteq D$ and $D \sqsubseteq C$. 

The assertional part $\mathcal{A}$ of $\mathcal{K}$ is also known as a $\mathcal{SHIQ}$-ABox, consisting of a set of assertions (facts) about 
individuals, in the form of 

\begin{center}
\renewcommand{\arraystretch}{1.1}
\begin{tabular}{ll}
$C(a)$ & class assertion \\
$R(a,b)$ & role or property assertion \\
$a \not\approx b$ & inequality assertion
\end{tabular}
\end{center}

\noindent
where $C$ is a $\mathcal{SHIQ}$ concept, $R \in \textbf{R}^*$, and $a, b \in \textbf{I}$. 
\end{definition}

Note that explicit assertion of $a \not\approx b$ is supported in $\mathcal{SHIQ}$, while, conversely, explicit assertion of $\it equality$, 
i.e. $a \approx b$, is not supported, since its realization relies on equivalence between nominals \citep{Baader2007,FOST}, i.e. $\{a\} \equiv \{b\}$, which is illegal in $\mathcal{SHIQ}$.

\begin{definition} [{\bf $\mathcal{SHIQ}$ Semantics}]
The meaning of an ontology is given by an interpretation denoted ${\mathcal{I}}=(\Delta^{\mathcal{I}},.^{\mathcal{I}})$, 
where $\Delta^{\mathcal{I}}$ is a non-empty domain and
$.^{\mathcal{I}}$ is an $\it interpretation\ function$. 
This interpretation function $.^{\mathcal{I}}$ maps:
\begin{enumerate}
	\item every atomic concept $A \in \textbf{C}$ to a set $A^{\mathcal{I}} \subseteq \Delta^{\mathcal{I}}$, 
	\item every individual $a \in \textbf{I}$ to an element $a^{\mathcal{I}} \in \Delta^{\mathcal{I}}$ and,
	\item every role $R \in \textbf{R}$ to a binary relation on the domain, i.e. $R^{\mathcal{I}} \subseteq \Delta^{\mathcal{I}}\times\Delta^{\mathcal{I}}$.
\end{enumerate}

\noindent
Interpretation for other concepts and inverse roles are given below:
\renewcommand{\arraystretch}{1.1}
\begin{center}
\begin{tabular}{rcl}
$\top^{\mathcal{I}}$ & = & $\Delta^{\mathcal{I}}$ \\
$\bot^{\mathcal{I}}$ & = & $\emptyset$ \\
$\neg C^{\mathcal{I}}$ & = & $\Delta^{\mathcal{I}} \backslash C^{\mathcal{I}}$ \\
$(R^-)^{\mathcal{I}}$ & = & $\{(y,x)\ |\ (x,y) \in R^{\mathcal{I}}\}$ \\
$(C \sqcap D)^{\mathcal{I}}$ & = &  $C^{\mathcal{I}} \cap D^{\mathcal{I}}$ \\
$(C \sqcup D)^{\mathcal{I}}$ & = &  $C^{\mathcal{I}} \cup D^{\mathcal{I}}$ \\
$(\exists R.C)^{\mathcal{I}}$ & = & $\{x\ |\ \exists y.(x,y) \in R^{\mathcal{I}} \wedge y \in C^{\mathcal{I}} \}$ \\
$(\forall R.C)^{\mathcal{I}}$ & = & $\{x\ |\ \forall y.(x,y) \in R^{\mathcal{I}} \wedge y \in C^{\mathcal{I}} \}$ \\
$(\leq nR.C)^{\mathcal{I}}$ & = & $\{x\ |\ |\{y | (x,y) \in R^{\mathcal{I}} \wedge y \in C^{\mathcal{I}}\}| \leq n \}$ \\
$(\geq nR.C)^{\mathcal{I}}$ & = & $\{x\ |\ |\{y | (x,y) \in R^{\mathcal{I}} \wedge y \in C^{\mathcal{I}}\}| \geq n \}$ \\
\end{tabular}
\end{center}
\noindent
where $|\cdot|$ represents the cardinality of a given set.
\end{definition}

An interpretation $\mathcal{I}$ $\it satisfies$ an axiom $\alpha : C \sqsubseteq D$, 
if $C^{\mathcal{I}} \subseteq D^{\mathcal{I}}$. Such interpretation is called a $\it model$ of axiom $\alpha$. An interpretation $\mathcal{I}$ satisfies
an axiom or assertion $\alpha$:
\renewcommand{\arraystretch}{1.1}
\begin{center}
\begin{tabular}{rcl}
$R_1 \sqsubseteq R_2$ & $\it iff$ & $R_1^{\mathcal{I}} \subseteq R_2^{\mathcal{I}}$ \\
$C(a)$ & $\it iff$ & $a^{\mathcal{I}} \in C^{\mathcal{I}}$ \\
$R(a,b)$ & $\it iff$ & $(a^{\mathcal{I}}, b^{\mathcal{I}}) \in R^{\mathcal{I}}$ \\
$a \not\approx b$ & $\it iff$ & $a^{\mathcal{I}} \not\approx b^{\mathcal{I}}$ \\
\end{tabular}
\end{center}

For an ontology $\mathcal{K}$, if an interpretation $\mathcal{I}$ satisfies every axiom in $\mathcal{K}$,
$\mathcal{I}$ is a $\it model$ of $\mathcal{K}$, written $\mathcal{I} \models K$. 
In turn, ontology $\mathcal{K}$ is said $\it satisfiable$ or $\it consistent$ if it has at least one model; otherwise, it is $\it unsatisfiable$
or $\it inconsistent$, and there exists at least one contradiction in $\mathcal{K}$.

\begin{definition} [{\bf Logical Entailment}]
Given an ontology $\mathcal{K}$ and an axiom $\alpha$, $\alpha$ is called a logical entailment of $\mathcal{K}$, denoted $\mathcal{K} \models \alpha$, 
if $\alpha$ is satisfied in every model of $\mathcal{K}$.
\end{definition}

\begin{definition} [{\bf Instance checking}]
Given an ontology $\mathcal{K}$, a $\mathcal{SHIQ}$-concept $C$ and an individual $a \in \bf I$, instance checking is defined as testing whether
${\mathcal{K}} \models C(a)$ holds.
\end{definition}

Notice that instance checking is considered the central reasoning task for information retrieval from 
ontology ABoxes and a basic tool for more complex reasoning services \citep{Schaerf1994}.
Instance checking can also be viewed as ``$\it classification$'' of an individual, that is, checking
if an individual can be classified as a member of some defined DL concept. 

In most logic-based approaches, realizations of reasoning services are based on a so-called $\it refutation$-$\it style\ proof$ 
\citep{Baader2007,Horrocks2004b}, that is, to convert a inference problem to the test of ontology satisfiability. For example, to 
decide if ${\cal K} \models C(a)$, one can check if ${\cal K} \cup \{\neg C(a)\} \not\models \bot$ instead, and the answer is 
$\it yes$ iff ${\cal K} \cup \{\neg C(a)\}$ is unsatisfiable, and the answer is $\it no$ otherwise.

Ontology reasoning algorithm in current well-known systems (e.g. Pellet, HermiT, FaCT++, and Racer.) 
are based on (hyper)tableau algorithms
\citep{Haarslev2001,Motik2007,Sirin2007,Tsarkov2006} that try to
build a universal model for the given ontology based on a set of
tableau expansion rules.
For details of tableau expansion rules and description of a standard tableau algorithm for $\cal SHIQ$, we refer readers to the work in \citep{Horrocks2000}.

\subsection{Other Definitions}

We adopt notations from tableaux \citep{Horrocks2000} for referring to individuals in $R(a,b)$, such that $a$ is called an 
$R$-$predecessor$ of $b$, and $b$ is an $R$-$successor$ (or $R^-$-$predecessor$) of $a$. If $b$ is an $R$-$successor$ of $a$, 
$b$ is also an $R$-$neighbor$ of $a$.

\begin{definition}[{\bf Signature}]
Given an assertion $\gamma$ in ABox $\mathcal{A}$, the $\it signature$ of $\gamma$, denoted $Sig(\gamma)$, is defined as a set of 
named individuals occurring in $\gamma$. 
This function is trivially extensible to $Sig(\mathcal{A})$ for the set of all individuals in ABox $\mathcal{A}$.
\end{definition}

\begin{definition}[{\bf Role path}]
We say there is a $\it role\ path$ between individual $a_1$ and $a_n$, if for individuals $a_1,\ldots,a_n \in \textbf{I}$ and 
$R_1,\ldots,R_{n-1} \in \textbf{R}$, there exist either $R_i(a_i, a_{i+1})$ or $R^-_i(a_{i+1}, a_i)$ in $\mathcal{A}$ for all 
$i = 1,\ldots,n-1$. 
\end{definition}

The role path from $a_1$ to $a_n$ may involve inverse roles. For example, given $R_1(a_1, a_2)$, $R_2(a_3,a_2)$, 
and $R_3(a_3,a_4)$, the role path from $a_1$ to $a_4$ is $\{R_1, R_2^-, R_3\}$, while the opposite from $a_4$ to $a_1$ is 
$\{R_3^-, R_2, R_1^-\}$.

\begin{definition} [{\bf Simple-Form Concept}]
A concept is said to be in $\it simple\ form$, if the maximum level of nested quantifiers in the concept is less than 2. 
\end{definition}
\noindent
For example, given an atomic concept $A$, both $A$ and $\exists R.A$ are simple-form concepts, while $\exists R_1.(A \sqcap \exists R_2.A)$
is not, since its maximum level of nested quantifiers is two. Notice however, an arbitrary concept can be $\it linearly$ reduced to the simple 
form by assigning new concept names for fillers of the quantifiers, for example, $\exists R_1.\exists R_2.C$ can be converted to $\exists R_1.D$ 
by letting $D \equiv \exists R_2.C$ where $D$ is a new concept name.

\section{Definition of an ABox Module}

The notion of ABox module here is to be formalized w.r.t. ontology semantics and entailments that are only meaningful 
when the ontology is consistent. In this study, however, instead of restricting ontologies to be consistent, 
we aim to discuss the problem in a broader sense such that the theoretical conclusions hold regardless the 
consistency state of an ontology. For this purpose, we introduce the notion of $\it justifiable\ entailment$ as defined below.

\begin{definition}[{\bf Justifiable Entailment}] \label{je}
Let $\mathcal{K}$ be an ontology, $\alpha$ an axiom, and $\mathcal{K} \models \alpha$.
$\alpha$ is called a justifiable entailment of $\mathcal{K}$, iff there exists a consistent fragment $\mathcal{K}' \subseteq K$
entailing $\alpha$, i.e. $\mathcal{K}' \not\models \bot$ and $\mathcal{K}' \models \alpha$.
\end{definition}

It is not difficult to see that justifiable entailments of an ontology simply make up a subset of its logical entailments. More precisely, 
for a consistent ontology, the set of its justifiable entailments is exactly the set of its logical ones according to the definition above; 
while for an inconsistent ontology, justifiable entailments are those $\it sound$ logical ones that have consistent bases \citep{Huang2005}.
For example, given an inconsistent ontology $\mathcal{K}$=$\{C(a), \neg C(a)\}$, both $C(a)$ and $\neg C(a)$ are justifiable entailments of 
$\mathcal{K}$, but $R(a,b)$ is not.

\begin{remark}
Unless otherwise stated, we take every entailment mentioned in this paper, denoted $\mathcal{K} \models \alpha$, to mean a justifiable 
entailment. 
\end{remark}

Though complete reasoning on a large ABox may cause intractabilities, in realistic applications, a large ABox may consist of data 
with great diversity and isolation, and there are situations where a complete reasoning may not be necessary.
For example, when performing instance checking of a given individual, say instance $\tt CHRNA4$ in the biomedical ontology example
in Section \ref{intro}, the ABox contains a great portion of other unrelated matters.

An ideal solution to this ABox reasoning problem is thus to extract a subject-related module and to have the reasoning applied 
to the module instead. Particularly, to fulfill soundness and completeness, this subject-related module 
should be a closure of entailments about the given entities, which in DL, should include class and property assertions. 
This leads to our formal definition of an ABox module as follows:

\begin{definition} [{\bf ABox Module}] \label{am} 
Let $\mathcal{K=(T,A)}$ be an ontology, and a set \textbf{S} of individuals be a signature. $\mathcal{M_S}$ with ${\mathcal{M_S}} 
\subseteq {\mathcal{A}}$ is called an ABox module for signature \textbf{S}, iff for any assertion $\gamma$ (either a class 
or a property assertion) with $Sig(\gamma) \cap \textbf{S} \neq \emptyset$, $({\mathcal{T}}, {\mathcal{M_S}}) \models 
\gamma$ iff ${\mathcal{K}} \models \gamma$. 
\end{definition}

This definition provides a necessary and sufficient condition for being an ABox module. 
It guarantees that sound and complete entailments (represented by $\gamma$) of individuals in signature \textbf{S} can 
be achieved by independent reasoning on the ABox module w.r.t $\mathcal{T}$. Class assertions preserved by an ABox module 
here are limited to atomic concepts defined in $\mathcal{T}$. 

\begin{remark}
Limiting the preserved class assertions to only atomic concepts simplifies the problems to deal with in this paper.
While on the other hand, it should not be an obstacle in principle to allow this notion of ABox module to be applied for 
efficient instance retrieval or answering conjunctive queries with arbitrary concepts.
This is because we can always assign a new name $A$ for an arbitrary query (possibly complex) concept $C$ by adding axiom 
$C \equiv A$ into $\mathcal{T}$, and re-extract the ABox modules. Our
empirical evaluation shows that the overhead (time for module extraction)
could be negligible when comparing with the efficiency
gained for ABox reasoning.
\end{remark}

Definition \ref{am} does not guarantee, however, uniqueness of an ABox module for signature $\bf S$, since any super set of $\mathcal{M_S}$ 
is also a module for $\bf S$, due to the $\it monotonicity$ of DLs \citep{Baader2007}. For example, given any signature $\bf S \subseteq I$, 
the whole ABox $\mathcal{A}$ is always a module for $\bf S$. 

Note however, the objective of this paper is to extract a precise ABox module and to select only $module$-$essential$ assertions for a signature, 
so that the resulting module ensures completeness of entailments while keeping a relatively small size by excluding unrelated assertions. 
Intuitively, for assertions to be module-essential for a signature $\bf S$, they must be able to affect logical consequences of any individual 
in $\bf S$, so that by having all these assertions included, the resulting ABox module can preserve all facts of the given entities.
This criteria for being a module-essential assertion can be formalized based on the notion of $\it justification$ \citep{Kalyanpur2007} 
as given below.

\begin{definition} [{\bf Justification}] \label{j}
Let $\mathcal{K}$ be an ontology, $\alpha$ an axiom, and $\mathcal{K} \models \alpha$. We say a fragment $\mathcal{K}' \subseteq K$ is 
some justification for axiom $\alpha$, denoted $ Just(\alpha, {\mathcal{K}})$, iff ${\mathcal{K}'} \models \alpha$ 
and ${\mathcal{K}''} \nvDash \alpha$ for any $\mathcal{K}'' \subset K'$.
\end{definition} 

\begin{definition} [{\bf Module-essentiality}] \label{mea}
Let $\mathcal{K}$ be an ontology, $a$ be an individual name, and $\gamma$ an ABox assertion.
$\gamma$ is called module-essential for $\{a\}$, iff 
\begin{center} 
$\gamma \in Just(\alpha,{\mathcal{K}})\ \wedge \ {\mathcal{K}} \models \alpha$,
\end{center}
for any assertion $\alpha$ of $a$ (either $C(a)$ or $R(a,b)$, with $C \in \bf C$ and $R \in \bf R^*$ ) that can be derived from $\mathcal{K}$.
\end{definition}

A justification $Just(\alpha,{\mathcal{K}})$ for axiom $\alpha$ is in fact a $\it minimum$ fragment of the knowledge base that implies $\alpha$,
and every axiom (or assertion) in $Just(\alpha,{\mathcal{K}})$ is thus essential for this implication.
Following from this point, an assertion $\gamma$ is considered able to affect logical consequences of some individual in signature $\bf S$, 
if and only if it appears in some justification for either ($\romannumeral 1$) property assertion or ($\romannumeral 2$) class assertion of 
that individual. If so, $\gamma$ is considered module-essential for $\bf S$. 
Having all such module-essential assertions for $\bf S$ included in
$\mathcal{M_S}$, the ABox module obviously preserves all facts including both class
and property assertions of the individuals in $\bf S$. On the other
hand, with this notion of module-essentiality, the ABox module can be
kept as $\it precise$ (and as small) as possible by excluding those assertions that are
not module-essential for $\bf S$.
In this paper, we aim to compute such a precise and small ABox module 
for a given signature $\bf S$, which would ideally consist of only module-essential assertions.



In the following sections, a theoretical approach and an approximation
are presented for the computation of an ABox module. 
Without loss of generality, we assume all ontology concepts are in
simple form as defined previously, and concept terms in all class
assertions are atomic.

We will show how to compute an ABox module in two steps: we begin by showing module extraction in an $\it equality$-$\it free$ 
$\mathcal{SHIQ}$ ontology, and later we show how the basic technique
can be extended to deal with equality. Notice that, this division is only
for presentation purpose, and does not have to be considered when
using our method for module extraction in practice.

\section{ABox Modules in Equality-Free Ontologies}

An ontology is called equality-free, if 
reasoning over this ontology does not invoke any procedure for individual identification
between named individuals or named and algorithm-generated individuals.

In this section, we concentrate on a method that computes ABox modules in ontologies of this type.
To further simplify the problem, we will consider module extraction for a single individual instead of an arbitrary signature $\bf S$, 
since the $\it union$ of modules of individuals in $\bf S$ yields a module for $\bf S$, as indicated by the following proposition.

\begin{proposition} \label{p2}
Let $\bf S$ be a signature, $\mathcal{M}_{\{i\}}$ be an ABox module for each individual $i \in {\bf S}$, and
\begin{center}
${\mathcal{M_S}}=\bigcup_{i \in S} {\mathcal{M}}_{\{i\}}$.
\end{center}
$\mathcal{M_S}$ is an ABox module for $\bf S$.
\end{proposition}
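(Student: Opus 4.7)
The plan is to unfold Definition \ref{am} directly and verify the biconditional $(\mathcal{T},\mathcal{M_S}) \models \gamma \Leftrightarrow \mathcal{K} \models \gamma$ for every assertion $\gamma$ with $Sig(\gamma) \cap \textbf{S} \neq \emptyset$. One direction comes essentially for free from monotonicity of $\mathcal{SHIQ}$: since $\mathcal{M_S} = \bigcup_{i \in \textbf{S}} \mathcal{M}_{\{i\}} \subseteq \mathcal{A}$, any consistent fragment of $(\mathcal{T},\mathcal{M_S})$ witnessing $\gamma$ under justifiable entailment is already a consistent fragment of $\mathcal{K}$ witnessing the same assertion, so $(\mathcal{T},\mathcal{M_S}) \models \gamma$ implies $\mathcal{K} \models \gamma$. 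I would dispose of this direction in a single line at the start.

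The substantive direction is $\mathcal{K} \models \gamma \Rightarrow (\mathcal{T},\mathcal{M_S}) \models \gamma$. The key move is to use the signature hypothesis to select a witness individual $i \in Sig(\gamma) \cap \textbf{S}$, which exists precisely because $Sig(\gamma) \cap \textbf{S} \neq \emptyset$. Since then $Sig(\gamma) \cap \{i\} \neq \emptyset$ and $\mathcal{M}_{\{i\}}$ is by hypothesis an ABox module for the singleton signature $\{i\}$, Definition \ref{am} applied to $\mathcal{M}_{\{i\}}$ yields $(\mathcal{T}, \mathcal{M}_{\{i\}}) \models \gamma$ from $\mathcal{K} \models \gamma$. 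Combining this with the inclusion $\mathcal{M}_{\{i\}} \subseteq \mathcal{M_S}$ and invoking monotonicity again delivers $(\mathcal{T}, \mathcal{M_S}) \models \gamma$, closing the argument.

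There is no genuinely hard step: the whole proof reduces to a short chain of ``pick a witness individual, invoke the single-individual module hypothesis, enlarge the ABox back up.'' The only place that requires care is checking that the justifiable-entailment variant of $\models$ introduced in the Remark following Definition \ref{je} transfers correctly under both of the subset inclusions used above. This should be routine, since in each case we reuse the same consistent witnessing fragment and leave $\mathcal{T}$ untouched, but I expect that the cleanest write-up will make this preservation explicit rather than gloss it, so that the proof remains valid regardless of whether the ambient ontology $\mathcal{K}$ is consistent.
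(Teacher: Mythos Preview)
Your proposal is correct and follows essentially the same approach as the paper: both directions rest on monotonicity, and for the substantive implication you pick a witness $i\in Sig(\gamma)\cap\mathbf{S}$, invoke the module property of $\mathcal{M}_{\{i\}}$, and then enlarge to $\mathcal{M_S}$ via $\mathcal{M}_{\{i\}}\subseteq\mathcal{M_S}$. The only cosmetic difference is that the paper packages the same steps as a proof by contradiction rather than a direct verification of the biconditional.
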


\begin{proof}
Assume $\mathcal{M_S}$ is not a module for $\bf S$, then there exists an assertion $\gamma$, with $Sig(\gamma) \cap \textbf{S} \neq \emptyset$, 
and either 
$(\romannumeral 1)$ $({\mathcal{T}}, {\mathcal{M_S}}) \models \gamma$ and ${\mathcal{K}} \not\models \gamma$ or,
$(\romannumeral 2)$ ${\mathcal{K}} \models \gamma$ and $({\mathcal{T}}, {\mathcal{M_S}}) \not\models \gamma$. 

$(\romannumeral 1)$ clearly contradicts DL monotonicity. For $(\romannumeral 2)$, let individual $a \in Sig(\gamma) \cap \textbf{S}$,  
${\mathcal{M}}_{\{a\}}$ with ${\mathcal{M}}_{\{a\}} \subseteq {\mathcal{M_S}}$ be the module for $a$. Then, by the definition of a module, we have 
$({\mathcal{T}}, {\mathcal{M}}_{\{a\}}) \models \gamma$, which again conflicts with the monotonicity of DLs, since ${\mathcal{M}}_{\{a\}}$
is subsumed by ${\mathcal{M_S}}$. Hence, the proposition holds.
\end{proof}

\subsection{Strategy}

Basically, to compute an ABox module as precise as possible for a given individual $a$, we have to test every assertion in $\mathcal{A}$ to see if it is module-essential 
for $a$. That is, for every assertion we have to test whether it contributes in deducing any property assertion or class assertion of individual $a$. 

As a fact of $\mathcal{SHIQ}$, deducing class assertions (classification) of an individual usually depends on both of its class and property 
assertions. While conversely, the tableau-expansion procedure for $\mathcal{SHIQ}$ indicates that deduction of a property assertion between 
different named individuals in $\mathcal{A}$ should not be affected by their class assertions, except via $\it individual\ equality$ \citep{Horrocks2000} 
as discussed in Section \ref{modequ}. This is consistent with the tree-model property of the description logic that, if nominals are not involved in the 
knowledge base, no tableau rules can derive connection (i.e. property assertion) from individual $a$ to an arbitrary named individual, except to itself in the presence of (local) reflexivity \citep{Motik2009}. 


Therefore, given an equality-free ontology, the above observation allows us to deduce property assertions from the ABox w.r.t. role 
hierarchy and transitive roles defined in $\mathcal{T}$ \citep{Horrocks2000}, 
and a strategy for extracting an ABox module can then be devised.

\begin{proposition} \label{p3}
Given an equality-free $\mathcal{SHIQ}$ ontology, computation of an ABox module for individual $a$ can be divided into two steps:
\begin{enumerate}
\item compute a set of assertions (denoted ${\mathcal{M}}_{\{a\}}^{\mathcal{P}}$) that preserves any property assertion $R(a,b)$ of $a$,
	with $a \not\approx b$ and $R \in \bf R^*$,
\item compute a set of assertions (denoted ${\mathcal{M}}_{\{a\}}^{\mathcal{C}}$) that preserves any class assertion $C(a)$ of $a$, with $C \in \bf C$.
\end{enumerate}
\end{proposition}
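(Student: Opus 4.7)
The plan is to show that the union $\mathcal{M}_{\{a\}} := \mathcal{M}_{\{a\}}^{\mathcal{P}} \cup \mathcal{M}_{\{a\}}^{\mathcal{C}}$ satisfies Definition \ref{am} with signature $\textbf{S} = \{a\}$. Because $Sig(\gamma) \cap \{a\} \neq \emptyset$ forces $a$ to occur in $\gamma$, and because equality is excluded by hypothesis, any relevant $\gamma$ is either a class assertion $C(a)$ with $C \in \textbf{C}$ or a property assertion $R(a,b)$ with $R \in \textbf{R}^*$. The required biconditional $(\mathcal{T}, \mathcal{M}_{\{a\}}) \models \gamma$ iff $\mathcal{K} \models \gamma$ is then verified case by case.

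The ``only if'' direction follows at once from the monotonicity of $\mathcal{SHIQ}$, since $\mathcal{M}_{\{a\}} \subseteq \mathcal{A}$. For the ``if'' direction, the case $\gamma = C(a)$ is handled by the defining property of $\mathcal{M}_{\{a\}}^{\mathcal{C}}$, with completeness lifted to the superset $\mathcal{M}_{\{a\}}$ again by monotonicity; the case $\gamma = R(a,b)$ is handled symmetrically by the defining property of $\mathcal{M}_{\{a\}}^{\mathcal{P}}$.

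The main obstacle, and the genuine content of the proposition, is justifying that the two sub-modules can be computed \emph{independently}: specifically, that the set $\mathcal{M}_{\{a\}}^{\mathcal{P}}$ in item (1) is well-defined without reference to class assertions, so that discarding class assertions during property-assertion extraction cannot cause an entailed $R(a,b)$ to be lost. This rests on the structural claim foreshadowed in the preceding paragraph: in an equality-free $\mathcal{SHIQ}$ ontology, derivation of a property assertion between distinct named individuals depends only on the property assertions of $\mathcal{A}$ together with $H_R$ and the Trans($\cdot$) declarations from $\mathcal{T}$. I would establish this by inspection of the tableau expansion rules of \citep{Horrocks2000}: absent nominals and individual merging, no expansion rule can introduce an $R$-edge between two pre-existing named nodes, because the $\exists$- and $\geq$-rules spawn only fresh anonymous successors, while the $\forall$- and $\leq$-rules only propagate labels or merge anonymous nodes. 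The tree-model property \citep{Motik2009} then confirms that class-driven expansions cannot yield a non-tree edge linking $a$ to another named individual.

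Once this independence claim is in hand, the decomposition is straightforward: $\mathcal{M}_{\{a\}}^{\mathcal{P}}$ may be taken as the closure of property assertions reachable from $a$ under role hierarchy and transitivity, $\mathcal{M}_{\{a\}}^{\mathcal{C}}$ as the assertions needed to reproduce every class entailment about $a$ atop this property skeleton, and their union preserves all entailments about $a$ without cross-interference. The equality-free assumption is precisely what buys the non-interference; the extension that lifts it is naturally deferred to the later treatment of modular equality.
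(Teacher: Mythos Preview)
Your proposal is correct and mirrors the paper's own reasoning: the paper does not supply a formal proof environment for this proposition but justifies it in the preceding paragraph by exactly the observation you isolate as the ``main obstacle''---that in an equality-free $\mathcal{SHIQ}$ ontology the tableau rules (absent nominals and merging of named individuals) cannot create an $R$-edge between two existing named nodes, so property entailments between distinct named individuals depend only on property assertions plus role hierarchy and transitivity, invoking the tree-model property \citep{Motik2009} just as you do. Your write-up simply makes explicit the monotonicity step and the case split on $\gamma$ that the paper leaves implicit.
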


For simplicity, we call $\mathcal{M}_{\{a\}}^{\mathcal{P}}$ a $\it property$-$\it preserved$ $\it module$ and ${\mathcal{M}}_{\{a\}}^{\mathcal{C}}$ a 
$\it classification$-$\it preserved$ $\it module$ for $\{a\}$.

\subsection{Property-Preserved Module}
\label{appm}
A property-preserved module of individual $a$ is essentially a set of assertions in ontology $\mathcal{K}$, which affect the deduction of $a$'s 
property assertions. This set is denoted ${\mathcal{M}}_{\{a\}}^{\mathcal{P}}$ such that
\begin{center}
${\mathcal{M}}_{\{a\}}^{\mathcal{P}} = \{\gamma\ |\ \gamma \in Just(R(a,b),{\mathcal{K}})\ \wedge \ {\mathcal{K}} \models R(a,b)\} $, 
\end{center}
where $\gamma$ is an ABox assertion, and $Just(R(a,b),{\mathcal{K}})$
is any justification for $R(a,b)$, with $a \not\approx b$.

In an equality-free $\mathcal{SHIQ}$ ontology, since property assertions between different individuals can be deduced from the ABox $\mathcal{A}$ 
w.r.t. role hierarchy and transitive roles, the computation for ${\mathcal{M}}_{\{a\}}^{\mathcal{P}}$ is then straightforward based on the following fact:
for any $R \in \textbf{R}^*$, if ${\mathcal{K}} \models R(a,b)$ with $a\not\approx b$, there are two possibilities on an equality-free 
$\mathcal{SHIQ}$ ontology, according to \citep{Horrocks2000}:
\begin{enumerate}
	\item assertion $R_0(a,b)$ or $Inv(R_0)(b,a) \in {\mathcal{A}}$ with $R_0 \sqsubseteq R$,
	\item assertions involved in a role path from $a$ to
          $b$, with all roles having a common transitive parent $R_0$
          and $R_0 \sqsubseteq R$.
          e.g. $R_1(a, a_1)$, $R_2(a_2, a_1)$, $R_3(a_2, b) \in {\mathcal{A}}$, with $R_1, R_2^-, R_3 \sqsubseteq R_0$ and $R_0$ is
          transitive.
\end{enumerate}

\noindent
Abstracting from a particular $R_0$, these two possibilities can be generalized into a formal criteria to select assertions to include in 
${\mathcal{M}}_{\{a\}}^{\mathcal{P}}$ for individual $a$:
\begin{description}
	\item[$C1.$] property assertions in $\mathcal{A}$ that have $a$ as either subject or object, or
	\item[$C2.$] property assertions in $\mathcal{A}$ that are involved in a role path from $a$ to
          some $b$, with all roles in the path having a common transitive parent.
\end{description}

\begin{proposition} 
\label{p4}
On an equality-free $\mathcal{SHIQ}$ ontology, the set of property assertions satisfying criteria $C1$ or $C2$ forms a property-preserved module 
${\mathcal{M}}_{\{a\}}^{\mathcal{P}}$ for individual $a$.
\end{proposition}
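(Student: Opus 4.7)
The plan is to verify both directions of what it means for $M$ (the set of property assertions satisfying $C1$ or $C2$) to be a property-preserved module for $\{a\}$: namely, that $\mathcal{K} \models R(a,b)$ iff $(\mathcal{T}, M) \models R(a,b)$ for every $R \in \mathbf{R}^*$ and every $b$ with $a \not\approx b$. The ``if'' direction I would dispatch immediately by monotonicity of $\mathcal{SHIQ}$: since $M \subseteq \mathcal{A}$, we have $(\mathcal{T}, M) \subseteq \mathcal{K}$, so any entailment out of $(\mathcal{T}, M)$ is already an entailment out of $\mathcal{K}$. All of the real content therefore sits in the ``only if'' (completeness) direction.

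For that completeness direction, the plan is to invoke the two-case characterization from Horrocks (2000) that is quoted just before the proposition: on an equality-free $\mathcal{SHIQ}$ ontology, any entailed $R(a,b)$ with $a \not\approx b$ is witnessed either by (i) an asserted fact $R_0(a,b) \in \mathcal{A}$ or $\mathrm{Inv}(R_0)(b,a) \in \mathcal{A}$ with $R_0 \sqsubseteq R$, or by (ii) a role path $a = a_0, a_1, \ldots, a_n = b$ in $\mathcal{A}$ whose edges share a common transitive ancestor $R_0$ with $R_0 \sqsubseteq R$. I would treat the two cases separately. In case (i) the single witnessing assertion has $a$ as an endpoint, so it matches $C1$ and lies in $M$; together with $\mathcal{T}$ and the role inclusion $R_0 \sqsubseteq R$ this already yields $R(a,b)$ in $(\mathcal{T}, M)$. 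In case (ii) every edge of the path lies on a role path from $a$ whose edges have a common transitive parent, hence each edge matches $C2$ and lies in $M$; then transitivity of $R_0$ composed with $R_0 \sqsubseteq R$ derives $R(a,b)$ in $(\mathcal{T}, M)$. Since every justification $Just(R(a,b),\mathcal{K})$ must trace back to one of these two patterns, at least one such justification has its ABox part contained in $M$, which is all that is needed.

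The hardest part will be handling inverse roles carefully in case (ii): one must read $C2$ so that an asserted $R^-_i(a_{i+1}, a_i)$ still counts as a forward edge of the role path from $a$ (consistent with the role-path definition in the preliminaries), and the ``common transitive parent'' condition must be understood modulo $\mathrm{Inv}(\cdot)$, so that $R_1, R_2^-, R_3, \ldots$ are all compared as subroles of the same transitive $R_0$. A secondary subtlety I plan to point out but not re-derive is that class assertions need not enter $M$: by the tree-model argument referenced just before Proposition \ref{p3}, in the equality-free, nominal-free fragment of $\mathcal{SHIQ}$ no tableau rule can produce a property assertion between distinct named individuals from class assertions alone, so only property assertions in $\mathcal{A}$ can participate in any justification of $R(a,b)$. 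I would cite this to rule out any third pattern of entailment outside (i) and (ii) and thereby close the argument.
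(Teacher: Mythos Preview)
Your proposal is correct and follows essentially the same route as the paper: both arguments reduce completeness to the two-case characterization of entailed role assertions in equality-free $\mathcal{SHIQ}$ (direct edge vs.\ transitive-role path) and then match those cases to $C1$ and $C2$. The only minor difference is one of emphasis: the paper re-grounds the characterization in the tableau construction of \cite{Horrocks2000}, explicitly noting that an edge $(a^{\mathcal{I}},b^{\mathcal{I}})\in\mathcal{E}(R)$ between named individuals can arise only from an explicit assertion or from the $\leq_r$-rule, the latter being excluded by the equality-free hypothesis; you instead take the already-stated two-case fact as given and add the monotonicity direction and the inverse-role bookkeeping, which the paper leaves implicit.
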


\begin{proof}
Correctness of this proposition can be verified by observation of the
tableau-constructing procedure for $\mathcal{SHIQ}$ presented in \citep{Horrocks2000}. Let a tableau $T=({\Delta}, {\mathcal{L,E}}, {.^{\mathcal{I}}})$ 
be an interpretation for $\mathcal{K}$ as defined in \citep{Horrocks2000},
where $\Delta$ is a non-empty set, $\mathcal{L}$ maps each element in $\Delta$ to a set of concepts, $\mathcal{E}$ maps each role to a set of pairs of 
elements in $\Delta$, and $.^{\mathcal{I}}$ maps individuals in $\mathcal{A}$ to elements in $\Delta$.

They have proven 
that, for tableau $T$ to be a model for $\mathcal{K}$, if ${\mathcal{K}} \models R(a,b)$,
there must be either $(a^{\mathcal{I}}, b^{\mathcal{I}}) \in {\mathcal{E}}(R)$ or a
path $(a^{\mathcal{I}}, s_1), (s_1, s_2),\ldots, (s_n, b^{\mathcal{I}})$ $\in {\mathcal{E}}(R_0)$ with $R_0 \sqsubseteq R$ and $R_0$ being transitive. 
The second scenario is consistent with criteria $C2$; while for the first one, i.e. 
$(a^{\mathcal{I}}, b^{\mathcal{I}}) \in {\mathcal{E}}(R)$, 
there are only two possibilities according to the tableau-constructing procedure: ($\romannumeral 1$) 
$R_0(a,b)$ or $R_0^-(b,a) \in {\mathcal{A}}$ with $R_0 \sqsubseteq R$ that triggers initialization of ${\mathcal{E}}(R_0)$; ($\romannumeral 2$) $R_0(a,b)$ or $R_0^-(b,a)$ 
is obtained through the ${\leq}_r$-rule for identical named individuals \citep{Horrocks2000} with $R_0 \sqsubseteq R$. ($\romannumeral 1$) 
reflects exactly the criteria $C1$, while ($\romannumeral 2$) does not apply here for equality-free ontologies. 
Therefore, the proposition holds.
\end{proof}

\subsection{Classification-Preserved Module}

To compute a precise ABox module ${\mathcal{M}}_{\{a\}}$, we need to further decide a set of 
assertions that affect classifications of the individual, and this set is denoted ${\mathcal{M}}_{\{a\}}^{\mathcal{C}}$ such that
\begin{center}
${\mathcal{M}}_{\{a\}}^{\mathcal{C}} = \{\gamma\ |\ \gamma \in Just(A(a),{\mathcal{K}})\ \wedge \ {\mathcal{K}} \models A(a)\}$, 
\end{center}
where $\gamma$ is an ABox assertion, $A$ is an atomic concept, and $Just(A(a),{\mathcal{K}})$ is any justification for $A(a)$. 

As previously stated, in $\mathcal{SHIQ}$ an individual is usually classified based on both its class and property assertions in $\mathcal{A}$.
It is obvious that explicit class assertions of $a$ form an indispensable part of ${\mathcal{M}}_{\{a\}}^{\mathcal{C}}$. 
Then, to decide any property assertion of $a$ that affects its classification, we examine each assertion captured in ${\mathcal{M}}_{\{a\}}^{\mathcal{P}}$. 

The decision procedure here is based on the idea that instance checking is reducible to concept subsumption \citep{Donini1992,Donini1994,Nebel1990}, i.e.
\begin{center}
\begin{tabular}{p{0.7\textwidth}}
{\it Given an ontology $\mathcal{K}=(T,A)$, an individual $a$ and a $\mathcal{SHIQ}$-concept $C$,
$a$ can be classified into $C$, if the concept behind $a$'s assertions in the ABox is subsumed by $C$ w.r.t. $\mathcal{T}$. }
\end{tabular}
\end{center}

\noindent
This idea automatically lends itself as a methodology, such that to determine any assertion of an individual that contributes to 
its classification, we have to decide if the concept behind this assertion is subsumed by some concept w.r.t. $\mathcal{T}$.

Consider the following example: Let an ontology $\mathcal{K}=(T,A)$ be
\begin{center}
$(\{\exists R_0.B \sqsubseteq A\},\ \{R_0(a,b), B(b)\})$,
\end{center} 
and let us ask whether $R_0(a,b)$ is essential for individual $a$'s classification or not. To answer that, we need to decide 
if the concept behind this property assertion is subsumed by some named
concept w.r.t. $\mathcal{T}$, i.e. to test if  
\begin{equation} \label{e3} 
\mathcal{K} \models \exists R.C_1 \sqsubseteq C_2
\end{equation}
for some named concept $C_2$, with $R_0 \sqsubseteq R$ and $C_1(b)$
entailed by the ontology. 
It is easy to see (\ref{e3}) is satisfied in this example by substituting $B$ for $C_1$ and $A$ for $C_2$.
We can thus determine $R_0(a,b)$ is one of the causes for the entailment $C_2(a)$ (i.e. it is in some justification 
for $C_2(a)$), and should be an element of ${\mathcal{M}}_{\{a\}}^{\mathcal{C}}$. 
Moreover, assertions in $Just(C_1(b),{\mathcal{K}})$ should also be elements of ${\mathcal{M}}_{\{a\}}^{\mathcal{C}}$, since $C_1(b)$ 
is another important factor to the classification $C_2(a)$.

The above example illustrates a simple case of a single property assertion affecting classification of an individual. 
Classification of an individual can also be 
caused by multiple assertions. For example, let ${\mathcal{K}}$ be 
\begin{center}
$(\{\exists R_0.B \sqcap \exists R_1.C \sqsubseteq A\}$, 

$\{R_0(a,b), R_1(a,c), B(b), C(c))\})$.
\end{center}
Here, $R_0(a,b)$ is still essential for the deduction $A(a)$. But when testing the subsumption in (\ref{e3}) for $R_0(a,b)$, it
will be found unsatisfied.

Thus, in order to comprehensively and completely include other information about the individual, condition (\ref{e3}) should be 
generalized into: 
\begin{equation} \label{et}
 \mathcal{K} \models \exists R.C_1 \sqcap C_3 \sqsubseteq C_2 
\end{equation}
where all other information of individual $a$ is summarized and incorporated into a concept $C_3$ with $C_3 \not\sqsubseteq C_2$. 

Moreover, taking the number restrictions in $\mathcal{SHIQ}$ into consideration, condition (\ref{et}) can be further generalized as:
\begin{equation} \label{e4}
 \mathcal{K} \models \ \geq nR.C_1 \sqcap C_3 \sqsubseteq C_2 \ \wedge \ |R^{\mathcal{K}}(a, C_1)| \geq n,
\end{equation}
where $C_2$ is a named concept, $C_1(b)$ and $C_3(a)$ are entailed by $\mathcal{K}$,
$\exists R.C_1$ is only a special case of $\geq nR.C_1$, and $R^{\mathcal{K}}(a, C_1) = \{b_i \in {\bf I} \ |\ {\mathcal{K}} 
\models R(a,b_i) \wedge C_1(b_i)\}$ 
denotes the set of distinct $R$-neighbors of individual $a$ in $C_1$.

In general, condition (\ref{e4}) indicates that in an equality-free $\mathcal{SHIQ}$-ontology, for any property assertion $R(a,b)$ to affect classification of individual $a$, the
corresponding concept must be subsumed by some named one, and 
for any (qualified) number restrictions, the number of distinct $R$-neighbors of $a$ should be no less than the cardinality required. 
With this condition derived, we are now in a position to present a procedure for computation of ${\mathcal{M}}_{\{a\}}^{\mathcal{C}}$ 
and also an ABox module for individual $a$, which is summarized in Figure \ref{f1}.

\begin{figure}[t]
\centering
\resizebox{0.9\textwidth}{!}{
\begin{tabular}{p{0.9\textwidth}}
\hline 
\begin{enumerate}
\item Compute a property-preserved module ${\mathcal{M}}_{\{a\}}^{\mathcal{P}}$ for the given individual $a$, by following the criteria $C1$ and $C2$ 
given in Section \ref{appm}. 

\item Add all explicit class assertions of $a$ into ${\mathcal{M}}_{\{a\}}^{\mathcal{C}}$.

\item For every $R_0(a,b)$ ($R_0 \in \bf R^*$) captured in ${\mathcal{M}}_{\{a\}}^{\mathcal{P}}$ and any $R$ with $R_0 \sqsubseteq R$, test if the 
corresponding condition (\ref{e4}) is satisfied. 
If it is yes, add $R_0(a,b)$, assertions in some $Just(C_1(b),{\mathcal{K}})$, and any inequality assertions between individuals in $R^{\mathcal{K}}(a, C_1)$ 
into ${\mathcal{M}}_{\{a\}}^{\mathcal{C}}$. 
		
\item Unite the sets, ${\mathcal{M}}_{\{a\}}^{\mathcal{P}}$ and ${\mathcal{M}}_{\{a\}}^{\mathcal{C}}$, to form an ABox module for $a$. 
\end{enumerate} \\ 
\hline
\end{tabular}}
\caption{Steps for computation of an ABox module for individual $a$.}
\label{f1}
\end{figure}

\subsection{An Approximation for Module Extraction}

Computation of ${\mathcal{M}}_{\{a\}}^{\mathcal{P}}$ depends on the complete role hierarchy, which should be computable using a DL-reasoner, 
since roles in $\mathcal{SHIQ}$ are atomic and, most importantly, the size of $\mathcal{T}$ should be much smaller than 
$\mathcal{A}$ in realistic applications \citep{Motik2006}. On the other hand, it is difficult to compute ${\mathcal{M}}_{\{a\}}^{\mathcal{C}}$, 
since it demands computation of both concept subsumption (i.e. condition (\ref{e4})) and justifications for class assertions 
(i.e. $Just(C_1(b),{\mathcal{K}})$). Simple approximations for both are given in this section as follows.

\begin{definition} [{\bf Approximation of (\ref{e4})}] \label{p8}
A syntactic approximation for condition (\ref{e4}) for $R_0(a,b)$ is that: to test if $\mathcal{K}$ contains any formula in the form as listed below:
\begin{equation} \label{e5}
\begin{split}
	& \exists R.C_1 \bowtie C_3 \sqsubseteq C_2 \\ 
	& |\ C_1 \sqsubseteq \forall R^-.C_2 \bowtie C_3 \\
	& |\ \geq nR.C_1 \bowtie C_3 \sqsubseteq C_2 \ \wedge  \ |\{b_i\ |\ R(a,b_i)\in {\mathcal{A}}\}| \geq n \\
\end{split}
\end{equation}
where $R_0 \sqsubseteq R$, $C_i \in \bf C$, $\bowtie$ is a place holder for $\sqcup$ and $\sqcap$.
Also note the following equivalences:
\begin{center}
\begin{tabular}{rcl}
$\exists R.C \sqsubseteq D$ & $\equiv$ & $\neg D \sqsubseteq \forall R.\neg C$   \\
$\geq nR.C$ $\sqsubseteq D$  & $\equiv$ & $\neg D \sqsubseteq\ \leq (n-1)R.C$.
\end{tabular}
\end{center}
\end{definition}

For assertion $R_0(a,b)$, the approximation for condition (\ref{e4}) is to check if any formula in $\mathcal{K}$ is
in the form of any listed axioms in (\ref{e5}). If it is yes, $R_0(a,b)$ may $\it potentially$ affect some logical entailment 
of individual $a$, and related assertions will be added into $a$'s ABox module to ensure preservation of this potential entailment.
Validity of this approximation is shown by the following proposition.

\begin{proposition} \label{p9}
Let $\mathcal{K=(T,A)}$ be a $\mathcal{SHIQ}$ ontology with simple-form
concepts only, $\geq n R_0.B$ and $C$ be $\mathcal{SHIQ}$ concepts, and $D$ a 
named concept. If 
\begin{equation} \label{sem-cond}
\mathcal{K} \models\  \geq n R_0.B \sqcap C \sqsubseteq D
\end{equation}
with $C \not\sqsubseteq D$, there must exist some formula in $\mathcal{T}$ in the form as listed in
(\ref{e5}) for some $R$ with $R_0 \sqsubseteq R$.
\end{proposition}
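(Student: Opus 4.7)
The plan is to prove the contrapositive: assuming $\mathcal{T}$ contains no axiom matching any of the shapes in (\ref{e5}) for any superrole $R$ of $R_0$, I will construct a model of $\mathcal{K}$ together with an element $x$ satisfying $(\geq n R_0.B \sqcap C)(x)$ but $x \notin D^{\mathcal{I}}$, thereby refuting (\ref{sem-cond}). Since $D$ is a named concept and $C \not\sqsubseteq D$, there exists some model $\mathcal{I}_0$ of $\mathcal{K}$ and some $x \in \Delta^{\mathcal{I}_0}$ with $x \in C^{\mathcal{I}_0} \setminus D^{\mathcal{I}_0}$. The task is then to perturb $\mathcal{I}_0$ (or, more safely, build a companion tree-model glued at $x$) so that $x$ acquires $n$ distinct fresh $R_0$-successors $y_1,\dots,y_n$ each of which lies in $B^{\mathcal{I}}$, while ensuring no axiom of $\mathcal{T}$ newly forces $D(x)$.

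The core of the argument is a case analysis on the syntactic shape of axioms in $\mathcal{T}$, leveraging the simple-form assumption that quantifier nesting is at most one. I would give each fresh $y_i$ only the labels strictly required by $\mathcal{T}$: namely $B$, the consequences of applying any axiom of the form $C' \sqsubseteq D'$ whose LHS $C'$ holds at $y_i$, and the propagated $\forall R.C''$ fillers coming from $x$. Because every axiom has nesting depth $\leq 1$, the constraints that can fire at $x$ due to the enlarged $R_0$-neighborhood are exactly of three kinds: (a) an axiom with $\exists R.C_1$ on the LHS combined by $\sqcap$ or $\sqcup$ with some $C_3$ and concluding a named concept on the RHS — the first form in (\ref{e5}); (b) an axiom whose RHS propagates back via $\forall R^-.\cdots$, so that one of the new successors pushes a new label onto $x$ — the second form; or (c) an axiom with a number restriction $\geq mR.C_1$ on the LHS, which fires only once $x$ actually has $m$ many $R$-neighbors in $C_1$ — the third form, together with its side condition on the ABox.

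Concretely, I would argue that if none of these three shapes occurs in $\mathcal{T}$ for any $R$ with $R_0\sqsubseteq R$, then freshly-added $y_i$'s have no way to force a new atomic label on $x$: axioms not mentioning any superrole of $R_0$ cannot even see the new edges, and axioms mentioning such a superrole but not fitting (a)–(c) either impose constraints purely on the $y_i$'s (which I satisfy by choosing their labels freely subject to local closure under $\mathcal{T}$) or have LHS already true/false at $x$ independently of the new edges. I would also verify that adding the new $R_0$-successors respects role hierarchy and transitivity axioms: since the $y_i$ are fresh, no new $R_0$-composition path is created through $x$ beyond $(x,y_i)$ itself, so transitive closures of superroles are undisturbed at existing pairs.

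The delicate part will be the number-restriction case and the interaction with inverse/transitive superroles. For (c), if $\geq mR.C_1 \sqcap C_3 \sqsubseteq C_2$ lies in $\mathcal{T}$ and enlarging to $n$ successors pushes $x$ past the threshold $m$, then the side condition in (\ref{e5}) on the ABox must be non-vacuous; I would argue that the semantic entailment (\ref{sem-cond}) restricted to the witness $x$ either already had those $m$ neighbors (so the axiom fired in $\mathcal{I}_0$ and $D(x)$ would already hold, contradicting the choice of $x$) or now needs $n \geq m$ new ones, which is exactly the case flagged by the third form. For the $\forall R^-$ case, I need the $y_i$'s to be fresh enough that the only backward propagation is via axioms of shape (b); choosing $y_i$ outside the interpretation domain of $\mathcal{I}_0$ and labeling them minimally accomplishes this. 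Assembling these pieces yields a model in which $x\in(\geq n R_0.B\sqcap C)^{\mathcal{I}}\setminus D^{\mathcal{I}}$, contradicting (\ref{sem-cond}) and completing the proof.
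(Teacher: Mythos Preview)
Your approach is genuinely different from the paper's. The paper does not build a countermodel by extending an existing model with fresh successors. Instead, it first normalizes all role restrictions in $\mathcal{T}$ to the $\geq nR.C_1$ form (using the listed equivalences), and then proves an abstract principle: if a concept $P$ occurs only on the right-hand side of GCIs (after NNF), then $P$ has no named subsumer other than $\top$. This is argued via the tableau calculus---if $P$ never occurs on the left, then $\neg P$ is never generated, so no clash with $P$ can arise and the refutation of $P \sqcap \neg Q$ fails. The paper then applies this principle uniformly by treating ``$\geq nR.C_1$ for some $R \sqsupseteq R_0$'' as a single syntactic unit. Your route is more semantic and more concrete; the paper's is more syntactic and leverages the tableau machinery directly, which lets it avoid having to verify that a perturbed interpretation remains a model.

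There is a gap in your construction that the paper's approach sidesteps. When you glue $n$ fresh $R_0$-successors onto $x$, you must check that \emph{every} TBox axiom is still satisfied at $x$, not merely that no axiom newly forces $D(x)$. In particular, an at-most restriction active at $x$---say $x \in (\leq m\,R.C')^{\mathcal{I}_0}$ arising from some GCI $E \sqsubseteq \leq m\,R.C'$ with $R_0 \sqsubseteq R$ and $B \sqsubseteq C'$---would be violated once you add $n > m$ fresh successors in $B$. Your dichotomy ``either imposes constraints purely on the $y_i$'s or has LHS already true/false at $x$ independently of the new edges'' is false for exactly such axioms: the constraint is at $x$ and it \emph{is} affected by the new edges. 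You would need to argue that any such at-most axiom, once pushed through the equivalence $\neg D' \sqsubseteq \leq (n{-}1)R.C \ \Leftrightarrow\ \geq nR.C \sqsubseteq D'$, already falls under one of the shapes in (\ref{e5}); this is essentially the normalization step the paper performs up front, and without it your case analysis is incomplete. The treatment of the ABox side condition in case (c) is also loose: the condition in (\ref{e5}) counts explicit $R$-neighbors of a \emph{named} individual $a$ in $\mathcal{A}$, whereas your witness $x$ is an arbitrary domain element, so the argument linking the two needs to be made explicit or dropped.
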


\begin{proof}
Since $\exists R.C_1$ is a special case of $\geq nR.C_1$, 
$\exists R.C_1 \sqsubseteq C_2$ is equivalent with $C_1 \sqsubseteq \forall R^-.C_2$, 
and together with those equivalences mentioned above, every role restriction in $\mathcal{T}$ can be converted to the form of $\geq nR.C_1$ 
by axiom manipulation. Then, the task here is reducible to proving that if (\ref{sem-cond}) is satisfied, there must be some formula in $\mathcal{K}$ 
in the form of $\geq nR.C_1\bowtie C_2 \sqsubseteq C_3$ for some $R$ with $R_0 \sqsubseteq R$.

It is straightforward that, if such no $R$ with $R_0 \sqsubseteq R$ is used in concept definition, $\geq nR.B$ is not comparable (w.r.t. subsumption) 
with any atomic concept (except $\top$ and its equivalents).
On the other hand, if $\geq nR.C_1$ is used in concept definition but occurs only in the right-hand side (r.h.s.) of GCIs,  it is unable 
to indicate any atomic concept as its subsumer, which can be confirmed by observation of a tableau-constructing procedure.

Let $P,Q$ be two atomic concepts, $Q \neq \top$, $P$ and $\neg P$ not fillers in any restrictions, and all concepts of $\mathcal{T}$ in NNF.
Assume ($\ast$) $P$ occurs only in r.h.s. of GCIs (or $\neg P$ in l.h.s.), and there is a consistent fragment $\mathcal{T}' \subseteq T$ that 
${\mathcal{T}'} \models P \sqsubseteq Q$. It follows that: 
\begin{description}
\item [$\rm (E1)$] ${\mathcal{T}'} \cup \{\top(a)\} \models \neg P \sqcup Q (a)$ for any individual $a$, since $P \sqsubseteq Q$ implies $\top \sqsubseteq \neg P \sqcup Q$.
\item [$\rm (E2)$] ${\mathcal{T}'} \cup \{\neg Q(a)\} \models \neg P(a)$, because of (E1).
\item [$\rm (E3)$] ${\mathcal{T}'} \cup \{P \sqcap \neg Q(a)\} \models \bot$, the so-called refutation-style proof for $P \sqsubseteq Q$.
\end{description}

\noindent
(E1) implies that, in any tableau that is a model of ${\mathcal{T}'} \cup \{a\}$, there must be either $\neg P$ or $Q$ in ${\mathcal{L}}(a^{\mathcal{I}})$
(i.e. the class set of $a^{\mathcal{I}}$ in the tableau), 
which can be shown by contradiction: suppose ${\mathcal{I}}_1$ is a model for ${\mathcal{T}'} \cup \{a\}$, where neither 
$\neg P$ nor $Q$ is in ${\mathcal{L}}(a^{{\mathcal{I}}_1})$. Let ${\mathcal{I}}_2$ be another tableau such that ${\mathcal{I}}_2$ coincides with ${\mathcal{I}}_1$ 
except ${\mathcal{L}}(a^{{\mathcal{I}}_2})$ is extended with $\{P,\neg Q\}$, and ${\mathcal{I}}_2$ should be clash-free
since both $P$ and $Q$ are atomic and no tableau rules can be applied. Thus, ${\mathcal{I}}_2$ turns out to be a model for ${\mathcal{T}'} \cup \{P \sqcap \neg Q(a)\}$ 
that violates (E3).

Analogously for (E2), there must be $\neg P$ in ${\mathcal{L}}(a^{\mathcal{I}})$ for any model of ${\mathcal{T}'} \cup \{\neg Q(a)\}$. Nevertheless, if $P$ occurs only in r.h.s. 
of GCIs in $\mathcal{T}$, $\neg P$ can never exist after the NNF transformation of axioms in $\mathcal{T}$, and since $P$ and $\neg P$ are not fillers in any restrictions,
${\mathcal{L}}(a^{\mathcal{I}})$ can never comprise $\neg P$ according to the tableau rules \citep{Horrocks2000}. 
Hence, the original assumption ($\ast$) does not hold.

The above case essentially indicates that, if an atomic concept occurs only in the r.h.s. of GCIs in $\mathcal{T}$, its subsumer is undecidable.
And the same general principle applies, if we consider all $\geq nR.C_1$ for any $R$ with $R_0 \sqsubseteq R$ as a single unit.
Thus, there must be some $\geq nR.C_1$ with $R_0 \sqsubseteq R$ occurring in l.h.s. of GCIs in $\mathcal{T}$, if (\ref{sem-cond}) is true.
\end{proof}

Proposition \ref{p9} shows the completeness of the approximation (\ref{e5}).
We can thus conclude that an ABox module resulting from this approximation is still able to capture complete classifications (w.r.t. $\mathcal{T}$) 
of the given individual, which are derivable from its property assertions. The following statement is an immediate consequence of this conclusion:
if $C_1 \neq \top$, an approximation for the set of assertions in some
$Just(C_1(b),{\mathcal{K}})$ is ${\mathcal{M}}_{\{b\}}$, an ABox module 
for $b$, which is computed using the same strategies described here.

Procedure for this approximation is then summarized in Figure \ref{fig5}.

\begin{figure}[t]
\centering
\resizebox{0.9\textwidth}{!}{
\begin{tabular}{p{0.9\textwidth}}
\hline 
\begin{enumerate}
\item Compute a property-preserved module ${\mathcal{M}}_{\{a\}}^{\mathcal{P}}$ for the given individual $a$, by following the criteria $C1$ and $C2$ 
given in Section \ref{appm}. 

\item Add all explicit class assertions of $a$ into ${\mathcal{M}}_{\{a\}}^{\mathcal{C}}$.

\item For every $R_0(a,b)$ captured in ${\mathcal{M}}_{\{a\}}^{\mathcal{P}}$ and any $R$ with $R_0 \sqsubseteq R$, test if $\mathcal{K}$ contains 
any formula in the form as listed in (\ref{e5}). If it is yes, add $R_0(a,b)$, all assertions in ${\mathcal{M}}_{\{b\}}$, and any inequality assertions 
between $a$'s $R$-neighbors into ${\mathcal{M}}_{\{a\}}^{\mathcal{C}}$.

\item Unite the sets, ${\mathcal{M}}_{\{a\}}^{\mathcal{P}}$ and ${\mathcal{M}}_{\{a\}}^{\mathcal{C}}$, to form an ABox 
module for $a$. 
\end{enumerate} \\ 
\hline
\end{tabular}}
\caption{Steps of the approximation for module extraction for individual $a$.}
\label{fig5}
\end{figure}

\section{Module Extraction with Equality}
\label{modequ}

In this section, we show how the outcome from the previous section can be utilized to tackle module extraction 
with individual equality.


In $\mathcal{SHIQ}$, individual equality stems from the at-most number
restriction (i.e. $\leq n.R.C$) \citep{Horrocks2000}, such that, if
individual $x$ belongs to a concept $\leq n.R.C$ while it has more than
$n$ entailed $R$-neighbors in $C$, then at least two of these $R$-neighbors
could be identical. 

This means that, the determination of individual equality requires computation of both property and class assertions of 
related individuals. 
Besides, the strategy proposed in Proposition \ref{p3} becomes infeasible, since property assertions can 
be derived from equalities (e.g. given $y \approx z$, $R(y,w)$ simply implies $R(z,w)$). 
In other words, with equality, the computation of ${\mathcal{M}}_{\{a\}}^{\mathcal{P}}$ may be dependent on that of ${\mathcal{M}}_{\{a\}}^{\mathcal{C}}$.

To address this, we present a procedure for extraction of ABox modules, which first modularizes the ABox as if it were equality-free, 
and then resolves equality through post-processing.

\begin{proposition} \label{p10}
Let individual $x \in \ \leq nR.C$ have more than $n$ entailed $R$-neighbors in
$C$, of which two, $y$ and $z$, can be determined to be equal (i.e. $y \approx z$).
Let signature $\bf S$ consist of $x$ and all its $R$-neighbors in $C$, and
\begin{center}
${\mathcal{M_S}} = \bigcup_{i \in S} {\mathcal{M}}_{\{i\}}$
\end{center}
where ${\mathcal{M}}_{\{i\}}$ is an ABox module for each individual $i \in \bf S$ in the ``equality-free'' ABox.
${\mathcal{M_S}}$ preserves the equality $y \approx z$.
\end{proposition}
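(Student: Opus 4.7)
The plan is to argue that the entire reason $\mathcal{K} \models y \approx z$ survives inside $\mathcal{M_S}$, so $(\mathcal{T}, \mathcal{M_S}) \models y \approx z$ follows by replaying the same inference. Since in $\mathcal{SHIQ}$ the identification of named individuals only arises via the $\leq_r$-rule, the derivation of $y \approx z$ from $\mathcal{K}$ ultimately rests on four ingredients: (i) a chain of atomic class assertions and TBox axioms that forces $(\leq nR.C)(x)$; (ii) property assertions witnessing each entailed $R(x, v_i)$ for the $R$-neighbors $v_i \in \mathbf{S}$; (iii) class assertions witnessing $C(v_i)$ for each $v_i$; and (iv) inequality assertions $v_i \not\approx v_j$ forcing the surplus of $R$-neighbors in $C$ to collapse onto the pair $\{y, z\}$.

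The forward steps are then to verify that each ingredient already lies in $\mathcal{M_S}$. Ingredient (ii) is captured in $\mathcal{M}_{\{x\}}^{\mathcal{P}}$ by criterion $C1$ of Section \ref{appm}, hence in $\mathcal{M_S}$. Ingredient (iii) is captured in $\mathcal{M}_{\{v_i\}}^{\mathcal{C}}$, because each $v_i \in \mathbf{S}$ and $\mathcal{M}_{\{v_i\}}$ preserves all atomic class assertions of $v_i$ by Definition \ref{am}. Ingredient (i) is captured by $\mathcal{M}_{\{x\}}^{\mathcal{C}}$ for the same reason applied to $x$. Finally, ingredient (iv) is explicitly appended to $\mathcal{M}_{\{x\}}^{\mathcal{C}}$ by step 3 of the extraction procedure in Figures \ref{f1} and \ref{fig5}: the at-most restriction $\leq nR.C$ matches the syntactic pattern in (\ref{e5}), which triggers the inclusion of all inequality assertions between $x$'s $R$-neighbors. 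With the four ingredients coexisting in $(\mathcal{T}, \mathcal{M_S})$, the $\leq_r$-rule fires exactly as in $\mathcal{K}$ and forces $y^{\mathcal{I}} = z^{\mathcal{I}}$ in every model.

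The main obstacle I anticipate is justifying that the specific inequalities preserved by the equality-free modules are sufficient to single out the pair $\{y, z\}$ rather than some other collapsing pair. The concern is that the modules $\mathcal{M}_{\{i\}}$ were constructed under the pretence that the ABox is equality-free, so one must argue that the union nevertheless retains every inequality actually used in a minimal justification of $y \approx z$. This is resolved by observing that $y$, $z$, and all other $R$-neighbors of $x$ in $C$ lie in $\mathbf{S}$ by hypothesis, and that step 3 of the extraction procedure accumulates inequalities between all of $x$'s $R$-neighbors (not merely a curated subset). Hence every inequality appearing in any justification for $y \approx z$ is retained in $\mathcal{M}_{\{x\}}^{\mathcal{C}} \subseteq \mathcal{M_S}$, and the pigeonhole argument that selects $y \approx z$ in $\mathcal{K}$ transfers verbatim to $(\mathcal{T}, \mathcal{M_S})$.
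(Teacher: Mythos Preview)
Your proposal is correct and follows essentially the same approach as the paper. The paper does not give a standalone proof for this proposition; its justification appears as the Basis case in the proof of Proposition~\ref{p12}, where it is argued (more tersely than you do) that merging the equality-free modules of $x$ and its $R$-neighbors preserves the property assertions $R(x,i)$ and the classifications $C(i)$ of each neighbor, which together suffice for the $\leq_r$-rule to fire. Your decomposition into four ingredients (the membership $(\leq nR.C)(x)$, the role links, the neighbor classifications, and the inequalities) is a more explicit unpacking of the same argument, and your treatment of ingredient~(iv) via step~3 of Figures~\ref{f1} and~\ref{fig5} fills in a detail the paper leaves implicit.
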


Proposition \ref{p10} suggests a strategy to retain equality between $y$ and $z$, by combining ``modules'' of related individuals.
With $y \approx z$ preserved, $\mathcal{M_S}$ automatically preserves all facts of $y$ and $z$ that are derived from the equality.
Subsequently, for neighbors of $y$ and $z$ (i.e. individuals in $Sig({\mathcal{M}}_{\{y\}}^{\mathcal{P}})$ and $Sig({\mathcal{M}}_{\{z\}}^{\mathcal{P}})$), 
modules of these entities should be combined with the $\mathcal{M_S}$ obtained above, so that their facts derivable from $y \approx z$ 
can also be captured. This strategy to retain equality in ABox modules should be applied $\it recursively$ for all identities.

Notice, however, that the strategy in Proposition \ref{p10} is based on the condition that individuals $y$ and $z$ be known to be equal in the first place, 
which cannot be assured without ABox reasoning.
Nevertheless, conceiving that equality in $\mathcal{SHIQ}$ stems from number restrictions \citep{Horrocks2000},
a simple approximation for it is given in the definition below.

\begin{definition} \label{p11}
Let $x$,$y$ and $z$ be named individuals, $y,z$ be $R$-neighbors of $x$, and $y \not\approx z$ not hold 
explicitly\footnote{Either $y \not\approx z$ is not explicitly given, or assertions $C(y)$ and $\neg C(z)$ do not occur simultaneously in ${\mathcal{A}}$.},
$y$ and $z$ are considered \textbf{potential equivalents}, if their $R$-predecessor $x$:
\begin{enumerate}
\item has no potential equivalents, and has $m$ $R$-neighbors with $m \gneq n$, or

\item has a set of potential equivalents, denoted $X$ (which includes $x$), and there exists a set $S \in {X \choose {m'-n'+1}}$, such that
\begin{equation} \label{e6}
\underset{S}{max} \ |\{y_i\ |\ R(x_i,y_i) \in {\mathcal{A}} \ \wedge \ x_i\in S\}|=m \gneq n.
\end{equation} 
\end{enumerate}
where $R$ is used in number restrictions as in the axiom listed in (\ref{e5}), and $n$ is the minimum of the set $\{k \ | \geq (k+1)R.C\ in\ l.h.s.\ of\ GCIs\}$.
${X \choose {m'-n'+1}}$\footnote{If $(m'-n'+1) \geq |X|$, it denotes $\{X\}.$} denotes the set of all $(m'-n'+1)$-combinations 
of set $X$, and variables $m'$ and $n'$ are for identification of $x$ that correspond to $m$ and $n$ above, respectively.
\end{definition}

\begin{figure}[t]
\centering
\includegraphics[width=0.8\textwidth]{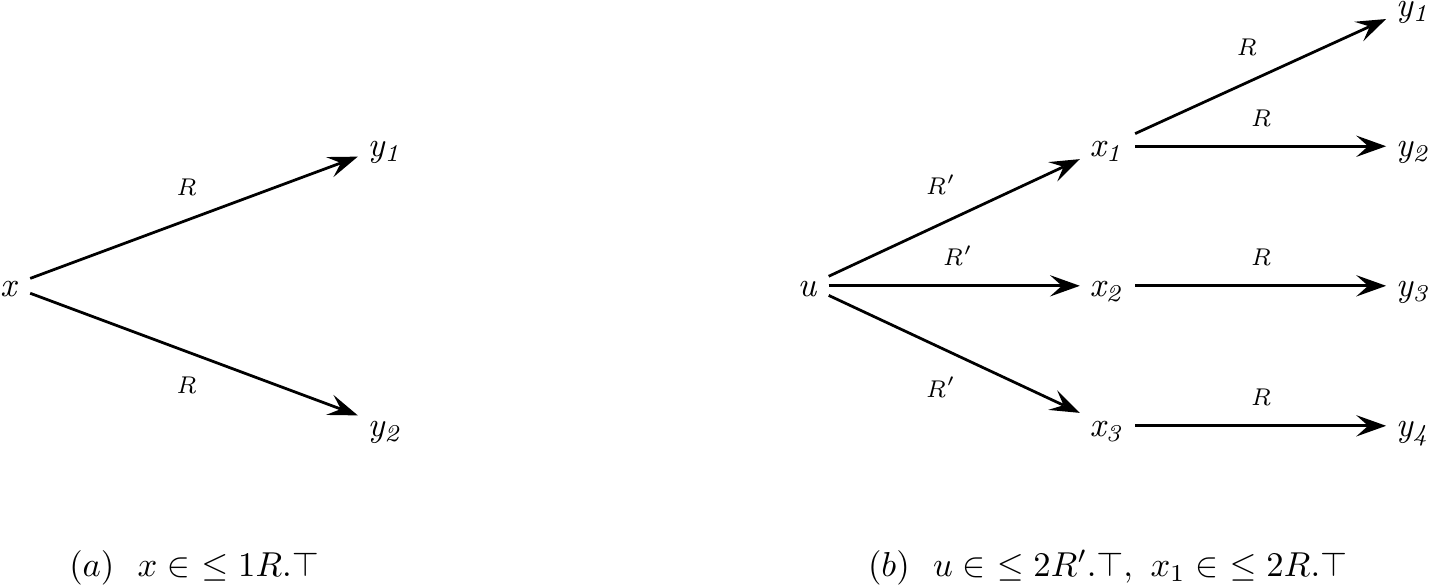}
\caption{Examples for two cases for individual $x$ in Definition \ref{p11}. In
figure (a), $y_1$ and $y_2$ are inferred to be equal, if $y_1 \not\approx y_2$ doesn't hold. 
In (b), all $x_i$(s) are considered potential equivalents if none
of them are explicitly asserted different from each other; on the other hand,
only two of them can be inferred to be equal, when sufficient
information is provided. If $x_1$ is equated with any one of the
others, its $R$-neighbors (i.e. $y_i$(s)) may be further equated.
}
\label{fig3}
\end{figure}

In this definition, for ${\mathcal{K}} \models\ \leq nR.C(x)$ to be possible it is 
required that $\leq nR.C$ or its isoform occur in the r.h.s. of GCIs in $\mathcal{T}$ 
(or $\geq (n+1)R.C$ in l.h.s. as in (\ref{e5})),
proof for which is similar to the one given for Proposition \ref{p9}.

Moreover, if $x$ itself has potential equivalents, individuals $y$ and $z$ should be elements of $\{y_i\ |\ R(x_i,y_i) \in {\mathcal{A}}\ \wedge \ x_i\in X\}$.
For the counting of potential $R$-neighbors of $x$, instead of taking the entire set $X$, only the 
$(m'-n'+1)$-combination of $X$ that maximizes the counting is considered (see (\ref{e6})), since given $m'$ $R'$-neighbors, 
the maximum possible number of decidable identical entities is $(m'-n'+1)$ out of $m'$ according to \citep{Horrocks2000}.
Examples for two cases of individual $x$ in Definition \ref{p11} are illustrated in Figure \ref{fig3}.

\begin{proposition} \label{p12}
Applying the strategy in Proposition \ref{p10} to potential equivalents generates modules that preserve individual
equality. 
\end{proposition}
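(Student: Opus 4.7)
The plan is to reduce Proposition \ref{p12} to two subclaims: (i) the potential-equivalents relation of Definition \ref{p11} over-approximates every equality $y \approx z$ entailed by $\mathcal{K}$ between distinct named individuals, and (ii) once (i) holds, iterating the union construction of Proposition \ref{p10} across all discovered families of potential equivalents produces a module that actually entails every such equality w.r.t.~$\mathcal{T}$. Step (ii) is then a fixed-point argument, so the bulk of the work sits in (i).

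For (i), I would follow a complete and clash-free $\mathcal{SHIQ}$ tableau derivation. Every entailed identification of distinct named individuals must originate from an application of the $\leq_r$-rule at some $x$ whose label contains $\leq nR.C$ while $x$ already carries more than $n$ $R$-neighbors in $C$ that are not pairwise asserted different. Re-running the ``an atomic concept occurring only on the r.h.s.\ of a GCI is unconstrainable'' argument from the proof of Proposition \ref{p9} shows that forcing $\leq nR.C$ on $x$ requires a syntactic occurrence of $\geq (n{+}1)R.C$ on the l.h.s.\ of some GCI, which is precisely the pattern scanned for in Definition \ref{p11}. In the base case, Definition \ref{p11}(1) applies directly. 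In the recursive case I would induct on the order in which equalities are introduced along the derivation: any prior identification of $x$ with some $x_i$ already places $x_i$ into the potential-equivalents set $X$ of $x$ by the inductive hypothesis, and the $(m'{-}n'{+}1)$-combination bound in~(\ref{e6}) is exactly the pigeonhole count of forced collisions among the $m'$ aggregated $R$-neighbors of the identified block.

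Step (ii) follows by invoking Proposition \ref{p10} on each family produced by (i) and then closing under the observation that a newly preserved equality $y \approx z$ may force further equalities among the successors of $y$ and $z$, triggering another round of module-unioning. Finiteness of the ABox together with monotonicity of modules under union yields a least fixed point $\mathcal{M_S}$ that preserves all derivable equalities and, through them, all class and property assertions they imply. The hardest step is the recursive part of (i): I must show that the purely syntactic check in Definition \ref{p11}(2) really captures every way in which a prior chain of identifications can feed a new $\leq_r$-rule application, because equality in $\mathcal{SHIQ}$ can propagate through inverse roles, transitive subroles, and nested number restrictions before surfacing. The inductive invariant therefore has to track both the current potential-equivalents set and the available pool of $R$-neighbors across that set, and it must remain stable under the interleaving of role-hierarchy reasoning with $\leq_r$-rule applications.
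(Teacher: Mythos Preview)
Your proposal is correct and shares the paper's overall shape: both argue inductively that the syntactic potential-equivalents test of Definition~\ref{p11} covers every firing of the $\leq_r$-rule, and both invoke Proposition~\ref{p10} to conclude that merging the relevant equality-free modules preserves each entailed identity. The paper differs in its induction variable: rather than inducting on the order in which equalities surface along a single tableau derivation for a fixed ABox, it inducts on the ABox itself, assuming a correctly modularized $\mathcal{A}$ and then extending it by one assertion $R(x',y')$ that triggers a fresh equality $y'\approx z'$. Your explicit decomposition into an over-approximation claim~(i) and a preservation/fixed-point claim~(ii) is cleaner than the paper's integrated sketch, and your appeal to the r.h.s.-only argument of Proposition~\ref{p9} to justify why $\leq nR.C(x)$ forces a matching syntactic occurrence fills in a step the paper states just before Definition~\ref{p11} but does not revisit inside the proof. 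Neither treatment works out in detail the concern you flag at the end---propagation of equality through inverses, transitive subroles, and nested number restrictions interleaved with role-hierarchy reasoning---so the paper's sketch is no more complete on that point than your plan; what you gain is a sharper statement of the invariant that would have to be carried.
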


\begin{proof}
(Sketch) We prove by induction.
\begin{description}
\item[Basis:] Assume initially, $y \approx z$ is the only equality in ABox that is entailed from the ${\leq}_r$-rule, then,
there must be $R_1(x,y)$ and $R_2(x,z)$ in ${\mathcal{M}}_{\{x\}}^P$, for some $x \in$ $\leq nR.C$, $x$ having more than $n$ $R$-neighbors, 
and $R_1, R_2 \sqsubseteq R$. According to the strategy in Proposition \ref{p10}, modules in ``equality-free'' ABox of $x$ and its $R$-neighbors are
merged, and resulted $\mathcal{M_S}$ thereby preserves all facts (except those derived from $y \approx z$) of these individuals, 
including assertion $R(x,i)$ and classification for each $x$'s $R$-neighbor $i$, which are sufficient to entail $y \approx z$, according to the
${\leq}_r$-rule \citep{Horrocks2000}.

\item[Inductive step:] Assume ABox $\mathcal{A}$ with arbitrary individual equality is modularized, and all extracted modules are in compliance with 
Definition \ref{am}. $\mathcal{A}$ is then further extended by adding a new assertion $R(x', y')$ that causes fresh equality between individual $y'$ and $z'$. 
If $x'$ has no equivalent, this simply follows what we have discussed above. 
Otherwise, we take into account all $x'$'s potential equivalents in set $X'$ for its $R$-neighbors that comprise both $y'$ and $z'$.
There exists a set $S \subseteq X'$ of true positives such that $|\{y_i'|R(x_i',y_i'), x_i'\in S\}|$ is greater than the cardinality restriction 
as required for individual identity, and $S$ is at most a $(m'-n'+1)$-combination of $X'$. Modules of $x'$ and all its $R$-neighbors are then
merged, and as discussed above equality between $y'$ and $z'$ is preserved. 
\end{description}
\end{proof}

\section{Related Work}
\label{relatedwork}

Modularization of the terminological part of an ontology has already been addressed by \citep{Grau2007a,Grau2007b}, where the author 
provided a well-defined idea of ontology modularity and developed a logic framework for extracting modules for terminological concepts.
On the other hand, for ABox reduction or partition, the problem has been addressed mainly by \citep{Fokoue2006}, \citep{Guo2006}, 
\citep{Du2007} and \citep{Wandelt2012}.

The idea of \citep{Fokoue2006} is to reduce large ABoxes by combining similar individuals (w.r.t. individual description) to develop 
a summary ABox as the proxy, which is small enough for scalable reasoning. This ABox summary can be useful in some scenarios that involve
general testing of consistency of an ontology, but it has a limited capability to support ontology queries such as instance retrieval.

In \citep{Guo2006}, the authors proposed a method for computing independent ABox partitions for $\mathcal{SHIF}$ ontologies, such that 
complete reasoning can be achieved if combining results of independent reasoning on each partition. This method for ABox 
partitioning is based on a set of inference rules in \citep{Royer1993}, and it encapsulates
assertions that are antecedents of a inference rule as an ABox partition. This technique is also used by \citep{Williams2010} 
to handle reasoning on large data sets.

ABox partition is also used in \citep{Du2007}, where the authors proposed an algorithm for $\mathcal{SHIQ}$ ontologies.
Based on the technique in \citep{Hustadt2004} that converts DL ontologies to disjunctive datalog programs, their algorithm further 
converts the disjunctive datalog program into a plain datalog program, to generate rules that can be employed as guidelines for ABox partition.

Both approaches above for ABox partition, however, failed to impose any logical restrictions on a single partition. An immediate consequence 
is that every single assertion can turn out to be a partition if the ontology is too simple, i.e. has no transitive roles nor concepts 
defined upon restrictions. Besides, to get complete entailments of an individual, one still has to reason over all partitions of the 
ontology.

The work most related to our proposed techniques is presented in \citep{Wandelt2012}, which focuses on $\mathcal{SHI}$ ontologies.
In their paper, instead of imposing any specification on a single ABox module, the authors provided a formal definition directly for ABox 
modularization that extends the notion of ABox partition from \citep{Guo2006}. A briefly summarized version of their definition is given below.

\begin{definition} [{\bf ABox Modularization} \citep{Wandelt2012}] 
Given ontology $\mathcal{K}=(T,A)$, an ABox modularization $M$ is defined as a set of ABoxes $\{ {\mathcal{A}}_1, \ldots, {\mathcal{A}}_n \}$, where each 
${\mathcal{A}}_i \subseteq {\mathcal{A}}$ is an ABox module. $M$ is said sound and complete for instance retrieval if, given any class assertion $C(a)$ 
($C$ is atomic), $\exists {\mathcal{A}}_i \in M$ that ${\mathcal{T}} \cup {\mathcal{A}}_i \models C(a)$ iff ${\mathcal{K}} \models C(a)$.
\end{definition}

In \citep{Wandelt2012}, the general idea for ABox modularization is based on connected components in an ABox graph. 
The authors presented an $\it ABox$-$\it Split$ based approximation, such that after applying ABox-split on each $R(a,b) \in {\mathcal{A}}$, every 
connected component in the resulted ABox graph forms an ABox module, and the set of all connected components forms the modularization 
defined above.

More precisely, this ABox-split technique checks every property assertion $R(a,b)$ in an ABox and will replace it with two generated ones, 
$R(a',b)$ and $R(a,b')$, if all the conditions below are satisfied for $R(a,b)$ \citep{Wandelt2012}:
\begin{enumerate}

\item $R$ is neither transitive nor has any transitive parent.

\item For every $\it C \in extinfo_{\mathcal{T}}^{\forall}(R)$, it satisfies that: ($\romannumeral 1$) $C \equiv \bot$, or ($\romannumeral 2$) 
there exists $D(b) \in {\mathcal{A}}$ such that ${\mathcal{K}} \models D \sqsubseteq C$, or ($\romannumeral 3$) there exists $D(b) \in {\mathcal{A}}$ such 
that ${\mathcal{K}} \models D \sqcap C \sqsubseteq \bot$.

\item For every $\it C \in extinfo_{\mathcal{T}}^{\forall}(R^-)$, it satisfies that: ($\romannumeral 1$) $C \equiv \bot$, or ($\romannumeral 2$) 
there exists $D(a) \in {\mathcal{A}}$ such that ${\mathcal{K}} \models D \sqsubseteq C$, or ($\romannumeral 3$) there exists $D(a) \in {\mathcal{A}}$ such 
that ${\mathcal{K}} \models D \sqcap C \sqsubseteq \bot$.
\end{enumerate}
\noindent
where $\it extinfo_{\mathcal{T}}^{\forall}(R)$ is the set of classes in $\mathcal{T}$ that are used as fillers of $R$ and are able to propagate through 
the $\forall$-rule \citep{Horrocks2000} in the tableau algorithm.

In general, this ABox-split based approximation will replace an original property assertion $R(a,b)$ with two generated ones, if it can be determined
that, either ($\romannumeral 1$) $R(a,b)$ has no influence on hidden implications of $a$ nor $b$, or ($\romannumeral 2$) its influence relies on 
individual's classification that has already been explicitly given. This approximation hence ``splits'' assertion $R(a,b)$ and separates $a$ and 
$b$ into different connected components (ABox modules) in ABox graph. 
For example, given ontology ${\mathcal{K}}=\{R(a,b), \forall R.B(a), B(b)\}$, $B(b)$ is entailed by $\mathcal{K}$ by either the explicitly given assertion 
or $\{R(a,b), \forall R.B(a)\}$. Nevertheless, since $B(b)$ is explicitly given, this approximation can separate individual $a$ and $b$ into two 
ABox modules.

To sum up, the method by \citep{Wandelt2012} aims at computing ABox modules as small as possible (different from the defined Exact ABox Modules in this paper) 
for $\mathcal{SHI}$-ontologies; and it makes use of information from the class hierarchy (including concept subsumption and concept disjointness, 
where complex classes may be involved) to rule out assertions for duplicate or impossible implications, which thus requires invocation of a DL reasoner.
Moreover, this approach requires a consistent ontology as the prerequisite.

\section{Empirical Evaluation}

\begin{table}[t]
\caption{ABox modules in different ontologies}
\label{results}
\setlength{\tabcolsep}{.9mm}
\begin{center}
\resizebox{\textwidth}{!}{
\begin{tabular}{c|c|c|c|c|c|c}
\hline
Ontology & Exp. & \#Ast. & \#Ind. & \shortstack{Max. \\ \#Ast./\#Ind.} & \shortstack{Avg. \\ \#Ast./\#Ind.} & \shortstack{Avg. \\ Extraction Time}  
\\ 

\hline
LUBM-1 & $\mathcal{SHI}$ & 67,465 & 17,175 & 2,921/593 & 13.1/2.4 & 1.12 ms 
\\

\hline
LUBM-2 & $\mathcal{SHI}$ & 319,714 & 78,579 & 2,921/593 & 14.4/2.5 & 1.22 ms 
\\

\hline
VICODI & $\mathcal{A}LH$ & 53,653 & 16,942 & 8,591/1 & 5.3/1 & 0.28 ms  
\\

\hline
AT & $\mathcal{SHIN}$ & 117,405 & 42,695 & 54,561/10,870 & 6.9/1.7 & 1.06 ms 
\\

\hline
CE & $\mathcal{SHIN}$ & 105,238 & 37,162 & 49,914/9,315 & 7.1/1.7 & 0.60 ms 
\\

\hline
DBpedia$^\star\_1$ & $\mathcal{SHIQ}$ & 402,062 & 273,663 & 94,862/18,671 & 2.9/1.1 & 0.62 ms
\\

\hline
DBpedia$^\star\_2$ & $\mathcal{SHIQ}$ & 419,505 & 298,103 & 160,436/17,949 & 2.8/1.1 & 0.59 ms
\\

\hline
DBpedia$^\star\_3$ & $\mathcal{SHIQ}$ & 388,324 & 255,909 & 140,050/35,720 & 3.1/1.2 & 1.80 ms
\\

\hline
DBpedia$^\star\_4$ & $\mathcal{SHIQ}$ & 398,579 & 273,917 & 139,422/18,208 & 2.9/1.1 & 0.51 ms
\\

\hline
\end{tabular}}
\end{center}
\end{table}

We implemented our approximation using Manchester's OWL API\footnote{http://sourceforge.net/projects/owlapi},
and evaluate it on a lab PC with Intel(R) Xeon(R) 3.07 GHz CPU, Windows 7, and 1.5 GB Java heap.
For test data, we collected a set of ontologies with large ABoxes: 
\begin{enumerate}
\item VICODI\footnote{http://www.vicodi.org} is an ontology that
  models European history with a simple TBox;
\item LUBM(s) are well-known benchmark ontologies generated using tools provided by \citep{Guo2005}; 
\item Arabidopsis thaliana (AT) and Caenorhabditis elegans (CE) are
  two biomedical 
ontologies\footnote{http://www.reactome.org/download} based on a
complex TBox\footnote{http://www.biopax.org} that models biological 
pathways; and
\item DBpedia$^\star$ ontologies are acquired from the original DBpedia ontology \footnote{http://wiki.dbpedia.org/Ontology?v=194q} \citep{Auer2007}.
They have a common terminological part $\mathcal{T}$, DL expressivity
of which is extended from $\mathcal{ALF}$ to $\mathcal{SHIQ}$ by
adding transitive 
roles, role hierarchy, and concepts defined on role restrictions, and their ABoxes are generated by randomly sampling the original ABox.
\end{enumerate}

\noindent
Details of these ontologies are summarized in Table \ref{results}, in terms of expressiveness (Exp.), number of assertions (\#Ast.), 
and number of individuals (\#Ind.).

\subsection{Evaluation of Extracted Modules}

\begin{table}[t]
\caption{Small and simple ABox modules in ontologies}
\label{smallabox}
\begin{center}
\resizebox{0.8\textwidth}{!}{
\begin{tabular}{c|c|c|c}
\hline
Ontology & Total \#Modules & \shortstack{\#Module with \\ \#Ast $\leq$ 10 (\%)} & \shortstack{\#Module with \\ Signature Size = 1 (\%)} \\

\hline
LUBM-1 & 7,264 & 7,210 (99.3) & 7,209 (99.2) \\

\hline
LUBM-2 & 31,361 & 30,148 (96.1) & 31,103 (99.2) \\

\hline
VICODI & 16,942 & 16,801 (99.2) & 16,942 (100) \\

\hline 
AT & 24,606 & 23,464 (95.4) & 23,114 (93.9) \\

\hline
CE & 21,305 & 20,010 (93.9) & 19,455 (91.3) \\

\hline
DBpedia$^\star\_1$ & 239,758  & 233,231 (97.3) & 237,676 (99.1) \\

\hline
DBpedia$^\star\_2$ & 264,079 & 257,555 (97.5) & 261,973 (99.2) \\

\hline
DBpedia$^\star\_3$ & 208,401 & 201,905 (96.9) & 206,377 (99.0) \\

\hline
DBpedia$^\star\_4$ & 241,451 & 234,903 (97.3) & 239,346 (99.1) \\

\hline
\end{tabular}}
\end{center}
\end{table}

\begin{figure}[t]
\centering
\includegraphics[width=0.7\textwidth]{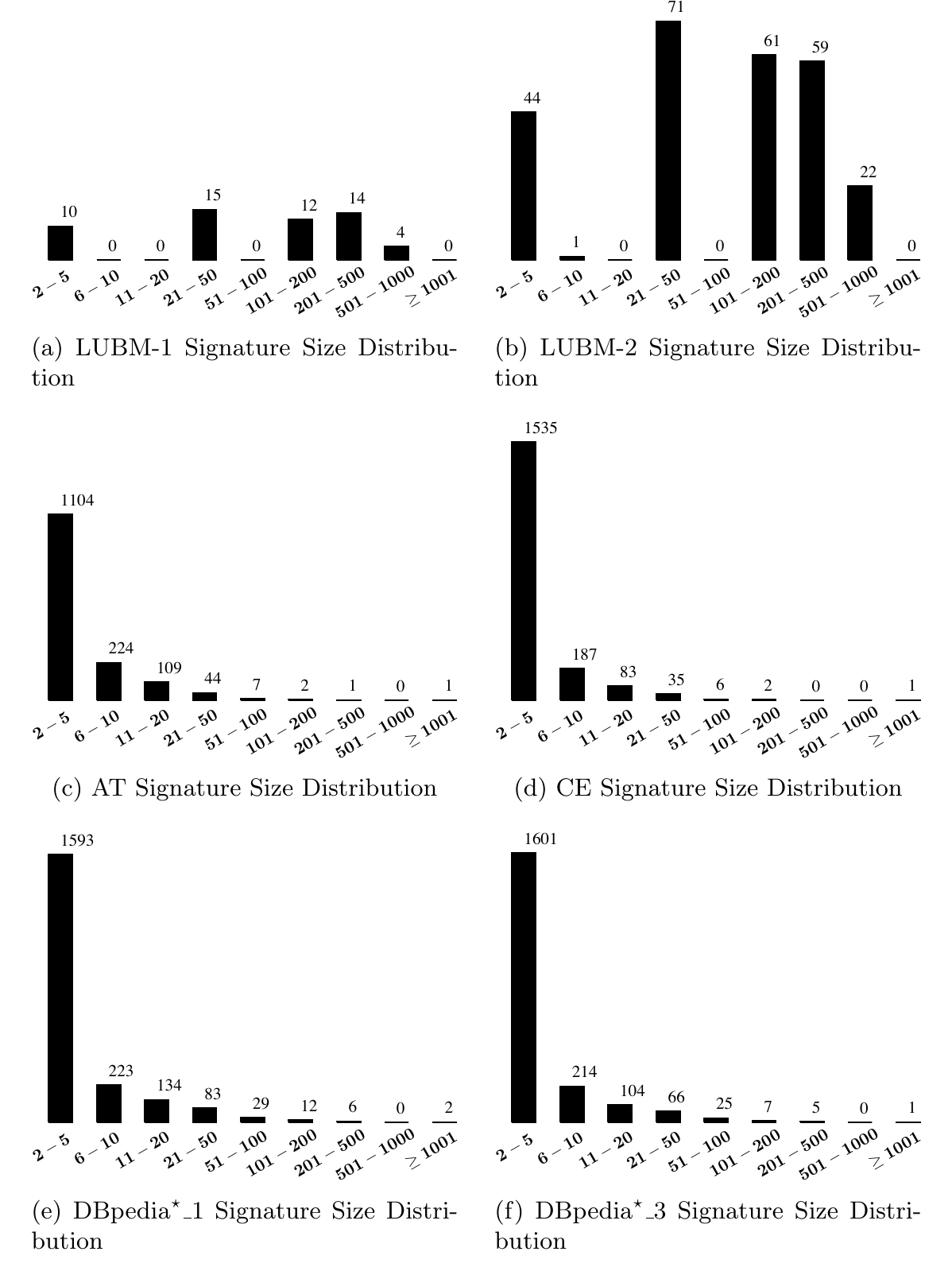}
\caption{Distributions of signature sizes of ABox modules. 
X-axis is the size range, and Y-axis is the number of modules.}
\label{fig1}
\end{figure}

These collected ontologies were modularized using the approximation for module extraction for every named individual.
As discussed previously, extracting ABox module for a single individual results in a module whose signature 
is constituted by a set of individuals because of module combination. 

In Table \ref{results}, we show the statistics of maximum and average module size in terms of number of assertions 
(\#Ast.) and size of signature (\#Ind.). 
It can be observed that, in average, modules of all these ontologies are significantly smaller as compared
with the entire ABox. In some ontologies, the maximum module size is relatively large, either because there is a great number 
of assertions of a single individual, or because there are indeed intricate relationships between individuals that may affect 
classification of each other.

VICODI is a simple ontology that has no property assertions in the ABox with condition (\ref{e5}) satisfied, and thus, the signature 
of every ABox module is constituted by only a single entity. However, its maximum ABox module consists of more than 8000 
assertions for a single individual. 

For the biomedical ontologies AT and CE, their maximum ABox modules are large and complex, mainly because: 
($\romannumeral 1$) the terminological part of these ontologies is highly complex, with 33 out of 55 object 
properties either functional/inverse functional or used as restrictions for concept definition; and ($\romannumeral 2$) these ontologies 
also have single individuals that each has a great number of property assertions 
(e.g. AT has one individual with 8520 assertions). Thus, connections between these individuals by any property assertions 
satisfying condition (\ref{e5}) result in large ABox modules. 

For LUBM-1 and LUBM-2, the sizes of their maximum modules 
are between of these two categories above, due to moderate complexity of the ontologies and the fact that only 4 out of 25 object 
properties are used for concept definition. 

For DBpedia$^\star$ ontologies, though their terminological part $\mathcal{T}$ is extended from the original $\mathcal{ALF}$ to $\mathcal{SHIQ}$,
the terminology is still simple, since the extension to $\mathcal{SHIQ}$ is limited and most of its concepts are not defined on restrictions.
Nonetheless, maximum ABox modules of these ontologies are even larger, and it is mainly because of those individuals that each has 
huge number of property assertions (e.g. DBpedia$^\star\_$1 has one individual with 40773 assertions, and DBpedia$^\star\_$2 has one 
individual with 60935 assertions, etc.), 
and some of their property assertions happen to satisfy the condition (\ref{e5}) that cause a great number of module combinations.

As shown in Table \ref{smallabox}, most of the ABox modules (more than 90\%) in these ontologies, are small and simple
with no greater than ten assertions or with a single individual in signature. For modules 
with more than one individual in signature, we plot distributions of signature sizes of these ABox modules in 
Figure \ref{fig1} for LUBM-1,LUBM-2, AT, CE, DBpedia$^\star\_$1 and DBpedia$^\star\_$3. 
The X-axis gives the size range and the Y-axis gives the number of modules in 
each size range. Because of a large span and uneven distribution of module sizes, we use non-uniform size ranges, so 
that we can have a relatively simple while detailed view of distributions of small, medium, and large modules. It can be 
observed from the figure that: ($\romannumeral 1$) for all these ontologies, the majority of ABox modules are still small with 
no more than five individuals in signature, ($\romannumeral 2$) LUBMs have more medium modules that have 
signature size between 50 and 600 due to moderate ontology complexity, and ($\romannumeral 3$) the biomedical ontologies have 
almost all modules with signature size below 200 but one with very large signature (more than 1,000 individuals), and similar situations
are also found in DBpedia$^\star$ ontologies. Those large ABox modules in both biomedical and DBpedia$^\star$ ontologies
could be caused by complexity of the ontology as discussed above.

\subsection{Optimization and Comparison}

In this section,
we discuss several simple optimization techniques that can be applied to further reduce the size of an ABox module.

As discussed previously, a precise ABox module for signature $\bf S$
should consist of assertions in justifications for entailments about
individuals in $\bf S$, i.e. module-essential assertions. 
Considering there may be more than one justification for a single entailment \citep{Kalyanpur2007}, an intuition for the strategy to  
reduce sizes of ABox modules is to exclude redundant justifications for the same entailments. 

For example, consider an ontology $\mathcal{K}$ that entails 
\begin{center}
$\exists R.B \sqsubseteq A$, $B(b_1)$ and $B(b_2)$, 
\end{center}
and the ABox contains
\begin{center}
$R(a, b_1)$ and $R(a,b_2)$.
\end{center}
\noindent
To preserve the fact $A(a)$ while excluding redundant justifications for a single entailment, 
an ABox module for individual $a$ should include either $R(a, b_1)$ with $Just(B(b_1),\mathcal{K})$ or $R(a, b_2)$ with $Just(B(b_2),\mathcal{K})$, but not
both. Additionally, instead of considering all $Just(B(b_1),\mathcal{K})$s ($\it resp.$ $Just(B(b_2),\mathcal{K})$s), taking a single justification for $B(b_1)$ ($\it resp.$ for $B(b_2)$) 
should be sufficient.

Therefore, to exclude redundant justifications for the same entailments from an ABox module, beyond testing the condition (\ref{e4}), i.e.
\begin{center}
 $\mathcal{K} \models \ \geq nR.C_1 \sqcap C_3 \sqsubseteq C_2 \ \wedge \ |R^{\mathcal{K}}(a, C_1)| \geq n $
\end{center}
for every property assertion $R_0(a,b)$ when computing ${\mathcal{M}}_{\{a\}}^{\mathcal{C}}$,
it is also necessary to consider: 
\begin{description}
\item [$Case1.$] if any justification for $C_2(a)$ has already been added into the module, or 

\item [$Case2.$] if there is any justification for $C_1(b)$ that can be easily obtained. 
\end{description}
In $Case1$, if the module already preserves the fact $C_2(a)$, redundant assertions for the same entailment will not be included; 
in $Case2$, instead of adding all $Just(C_1(b),{\mathcal{K}})$s to $a$'s module, we should first consider if there is any one that is easy to compute. 
Based on these two cases, we implemented three simple optimized approximations for evaluation and comparison:
\begin{description}
\item[$\rm \bf Opt1.$] Based on $Case1$, if $C_2(a)$ can be entailed from explicit class assertions of $a$, modules of $a$ and $b$
will not be merged.

\item[$\rm \bf Opt2.$] Based on $Case2$, if $B(b)$ can be entailed from explicit class assertions of $b$, instead of merging modules of $a$ and $b$, 
only explicit class assertions of $b$ will be added into $a$'s module. 
If we preprocess every axiom in $\mathcal{T}$ to the form of $\geq nR.C_1 \sqcap C_3 \sqsubseteq C_2$ as described in the proof for Proposition \ref{p9}, this 
optimization coincides with the one that can be obtained by extending the method in \citep{Wandelt2012}.

\item[$\rm \bf Opt3.$] The combination of Opt1 and Opt2.
\end{description}

\begin{table*}[t]
\caption{Evaluation and Comparison of Optimized Approximations}
\label{opts}
\setlength{\tabcolsep}{.9mm}
\begin{center}
\resizebox{0.9\textwidth}{!}{
\begin{tabular}{c|c|c|c|c|c|c}
\hline
& \multicolumn{2}{c|}{Opt1}  & \multicolumn{2}{c|}{Opt2 \citep{Wandelt2012}}  & \multicolumn{2}{c}{Opt3}  \\ 
& \multicolumn{2}{c|}{\ } & \multicolumn{2}{c|}{} \\ 
Ontology & \shortstack{Max. \\ \#Ast./\#Ind.} & \shortstack{Avg. \\ \#Ast./\#Ind.} & \shortstack{Max. \\ \#Ast./\#Ind.} 
& \shortstack{Avg. \\ \#Ast./\#Ind.} & \shortstack{Max. \\ \#Ast./\#Ind.} & \shortstack{Avg. \\ \#Ast./\#Ind.}
\\ 

\hline
LUBM-1 & 773/2 & 6.8/1.0 & 773/2 & 6.8/1.0 & 732/1 & 6.8/1.0
\\

\hline
LUBM-2 & 773/2 & 7.1/1.0 & 773/2 & 7.1/1.0 & 732/1 & 7.1/1.0
\\

\hline
AT &  54,561/10,870 & 6.9/1.7 & 54,561/10,870 & 6.9/1.7 & 54,561/10,870 & 6.9/1.7
\\

\hline
CE & 49,914/9,315 & 7.1/1.7 &  49,914/9,315 & 7.1/1.7 & 49,914/9,315 & 7.1/1.7 
\\

\hline
DBpedia$^\star\_$1 & 52,539/13,977 & 2.9/1.1 & 87,721/15,258 & 2.9/1.1 & 228/25 & 2.7/1.0
\\

\hline
DBpedia$^\star\_$2 & 108,772/10,568 & 2.8/1.1 & 76,386/12,058 & 2.7/1.1 & 150/18 & 2.6/1.0
\\

\hline
DBpedia$^\star\_3$ & 106,344/32,063  & 3.1/1.2 & 63,092/10,385 & 2.9/1.1 & 113/29 & 2.8/1.0
\\

\hline
DBpedia$^\star\_4$ & 40,760/8,182 & 2.9/1.1 & 85,152/11,366 & 2.8/1.1 & 171/18 & 2.7/1.0
\\
\hline
\end{tabular}}
\end{center}
\end{table*}

It is obvious that the first two strategies, Opt1 and Opt2, should have no advantage over each other, and their performances will mostly depend 
on the actual ontology data to which they are applied. More precisely, if $Case1$ prevails in the ontology, Opt1 is expected to 
generate smaller ABox modules than that of Opt2, and otherwise if $Case2$ prevails. Nonetheless, Opt3 should always have better or no-worse
performance than that of Opt1 and Opt2, as it takes advantage of both of them.

In general, these three optimized approximations are expected to outperform the original approximation and produce smaller 
ABox modules, since they all employ a DL-reasoner to rule out redundant assertions for the same entailments. 
Indeed, when an ontology is simple and contains considerable redundant implications, these methods are able to efficiently reduce sizes 
of ABox modules; while on the other hand, when ontologies are complex or with less explicit redundant information, these methods may not provide significant size reductions. 

Evaluation and comparison for these three optimized approximations are shown in Table \ref{opts}. 
For LUBM ontologies, all these optimized approximations efficiently reduce the size of ABox modules (especially the maximum one), and particularly, Opt3 
reduces signature of every ABox module to include only a single
individual. For DBpedia$^\star$ ontologies, Opt1 or Opt2 achieve only
limited size reduction  
for the maximum ABox modules, while their combination, i.e. Opt3, is able to decrease the size of maximum ABox modules significantly. Nevertheless, for the 
biomedical ontologies, which are complex and may contain few explicit duplicate entailments, none of the three optimization strategies produced reductions in module size.

\subsection{Reasoning with ABox Modules}

\begin{table}[t]
\caption{Modular v.s. Complete ABox reasoning}
\label{reasoning}
\setlength{\tabcolsep}{.9mm}
\begin{center}
\resizebox{0.7\textwidth}{!}{
\begin{tabular}{c|c|c|c|c}
\hline
& \multicolumn{2}{c|}{Modular}  & \multicolumn{2}{c}{Complete}  \\ 
& \multicolumn{2}{c|}{} & \multicolumn{2}{c}{} \\ 
Ontology & \shortstack{IC Time (ms) \\ Max./Avg.} & \shortstack{IR
  Time \\ Avg.} & \shortstack{IC Time (ms) \\ Avg.} &
\shortstack{IR Time \\ Avg.}
\\ 

\hline
LUBM-1 & 17.00 / 1.93 & 33.08 (s) & 733.21 & 3.50 (h)
\\

\hline
LUBM-2 & 17.00 / 1.91 & 150.36 (s) & 9,839.22 & --
\\

\hline
AT & 3,120.00 / 344.53 & 4.09 (h) & 11,378.63 & -- 
\\

\hline
CE & 3,151.00 / 542.60 & 5.60 (h) & 10,336.61 & -- 
\\

\hline
DBpedia$^\star\_$1 & 1,326.00 / 19.10 & 1.45 (h) & 6,187.01 & --
\\

\hline
DBpedia$^\star\_$2 & 1,497.00 / 20.20 & 1.67 (h) & 6,948.20 & --
\\

\hline
DBpedia$^\star\_3$ & 3,189.00 / 19.89 & 1.41 (h) & 6,087.23 & --
\\

\hline
DBpedia$^\star\_4$ & 1,154.00 / 20.00  & 1.52 (h)  & 6,305.41 & --
\\

\hline
\end{tabular}}
\end{center}
\end{table}

In this section, we show the efficiency of ontology reasoning gained
when reasoning is based on ABox modules ($\it modular\ reasoning$), and
compare it with that of the complete ABox reasoning. Notice
however,  the purpose here is not to compare modular reasoning with
those developed optimization techniques (e.g. lazy unfolding,
and satisfiability reuse etc. \citep{Motik2007,Horrocks2007}) in
existing reasoners, since they are in totally different categories and
modular reasoning still relies on the reasoning services provided by
current state-of-art reasoners.
Instead, what we aim to show here is in fact the improvement in time
efficiency that can be achieved when using the modular reasoning for
instance checking or instance retrieval on top of existing reasoning
technologies. The reasoner we used here is HermiT \citep{Motik2007,Motik2009},
which is one of the fully-fledged OWL 2 reasoners.

For evaluation, we extract ABox modules using the Opt3 discussed in
previous section and run the reasoning on each collected
ontology: we first randomly select 10 concepts that are defined on role restrictions from the testing
ontology, and for each one of them we perform the instance checking
(on every individual in the ontology) and retrieval 
using modular reasoning and complete reasoning, respectively.  Table
\ref{reasoning} details the reasoning time for both instance checking
(IC) and instance retrieval (IR). Particularly, for instance checking
using modular reasoning, we show both maximum and average reasoning
time over the 10 test concepts, since sizes of ABox modules may vary
greatly and affect the reasoning efficiency. For instance retrieval,
we simply report the average time. Besides, we also set a threshold
(24 hours) for the time-out, and if it happens to any one of the
test, we simply put a "--" in the corresponding table entry.

Notice that, for fairness the modular reasoning for instance retrieval
performed here is not parallelized, but instead running in an arbitrary
sequential order of ABox modules.  Nevertheless, we can still see the
great improvement on time efficiency when using the modular reasoning
for instance retrieval as shown in Table \ref{reasoning}. For example,
the average time for instance checking has been reduced significantly 
from seconds down to several milliseconds when using the modular
reasoning on LUBM(s) and DBpedia$^*$(s), and instance retrieval time
on LUBM(s) (respectively on DBpedia$^*$(s)) has been reduced from
several hours (respectively several days) down to seconds
(respectively less than two hours); even for those biomedical ontologies, 
the time for instance retrieval is also reduced from more than 130 hours
down to less than 6 hours. The reason behind all these improvements is
simply that the complexity of the tableau-based reasoning algorithms
is up to exponential time w.r.t. the data size for for
$\mathcal{SHIQ}$ \citep{Donini2007,Tobies2001}; once the data size is cut down,
the reasoning time could be reduced significantly.

One point to note here is that, when using the modular reasoning for
answering (conjunctive) queries, the time for instance retrieval in fact
should plus the time that is spent for modularizing the ABox.
Nevertheless, as we can see from Table \ref{results},  the overhead on
average is only around one millisecond for instance checking of each named
individual.

\section{Discussion and Outlook}

In this paper, we have proposed a formal definition of
an ABox module, such that each module ensures complete preservation of
logical entailments for a give set of individuals. Utilizing this
property, scalable object queries over big ontologies can be accomplished,
by either conducting an isolated reasoning on a single ABox
module when querying information about a particular group of
individuals, or by distributing independent reasoning tasks on ABox
modules into a cluster of computers when performing instance retrieval
or answering conjunctive queries.
To put more restrictions on an ABox module, we further defined
the notion of module-essentiality, which gives a formal criteria for ABox assertions 
to be semantically related (or unrelated) to a given signature and
could be useful for computation of a precise ABox module. 

To extract an ABox module, we presented a theoretical approach,
which is only aimed to give an exposition of the problem from a theoretical perspective
and to provide strategies that are provable theoretically sound.
For applicability in realistic ontologies, we provided a
simple and tractable approximation, which is straightforward and easy to implement,
but may include many unrelated assertions when the ontology is
complex, thus resulting in modules that are larger than desired. 
The undesired bulk of ABox modules could be caused by: ($\romannumeral 1$)
the syntactic approximation (i.e. (\ref{e5})) for semantic conditions (i.e. (\ref{e4})) that
could result in many false positives and hence cause the unnecessary
combination of ABox modules; ($\romannumeral 2$) the simple
approximation of individual equalities, which only checks syntactically
the possibility of an individual to be an instance of some concept $\leq n R.C$
and the approximated number of its $R$-neighbors that should be
no less than the associated cardinality required for individual identification;
and  ($\romannumeral 3$) last but not least, the intrinsic complexity of ontologies
that requires assertions to be grouped to preserve complete logical entailments.

Strategies for optimization can provide significant reduction in
module size under some conditions, as has been shown in the results
section. However, for highly complex ontologies like those biomedical
ones, more advanced optimization techniques are demanded. 
One of the directions for future optimization is obviously to provide more
rigorous approximations for condition (\ref{e4}) and individual
equality, and a possible solution is to add affordable semantical
verification of individual classifications, which may not only
prevent unnecessary module combination but also rule out false individual
equalities. Progress in this direction has already been made in our current projects.


\bibliographystyle{model2-names} 

\bibliography{aboxmodular}

\end{document}